\documentclass{article}

\usepackage[accepted]{icml_2021_like}

\usepackage{natbib}
\bibliographystyle{apalike}

\usepackage{tikz}
\usepackage{subfigure}

\usepackage{hyperref}
\usepackage{amsfonts}
\usepackage{mathtools}
\usepackage{enumitem}
\usepackage{microtype}

\usetikzlibrary{bayesnet}
\usepackage[overload]{textcase}
\usepackage[utf8]{inputenc}
\usepackage{multirow}
\usepackage{amsthm}
\usepackage{xspace}

\newtheorem{theorem}{Theorem}[section]
\newtheorem{lemma}[theorem]{Lemma}

\newtheorem{proposition}[theorem]{Proposition}
\newtheorem{corollary}[theorem]{Corollary}
\newtheorem{definition}[theorem]{Definition}

\newcommand{\para}[1]{\medskip \noindent {\bf #1}}

\newcommand{\bz}{\boldsymbol z}
\newcommand{\bZ}{\boldsymbol Z}
\newcommand{\be}{\boldsymbol e}

\newcommand{\bY}{\boldsymbol Y}
\newcommand{\bx}{\boldsymbol x}

\newcommand{\bX}{\boldsymbol X}

\newcommand{\bD}{\boldsymbol D}
\newcommand{\bs}{\boldsymbol s}

\newcommand{\bzero}{\boldsymbol 0}
\newcommand{\by}{\boldsymbol y}

\newcommand{\bt}{\boldsymbol t}

\newcommand{\btheta}{\boldsymbol \theta}
\newcommand{\bmu}{\boldsymbol \mu}
\newcommand{\bSigma}{\boldsymbol \Sigma}

\newcommand{\norm}[1]{\Vert #1 \Vert}

\newcommand{\cov}{\operatorname{Cov}}

\newcommand{\diff}{\,\mathrm{d}}

\begin{document}

\twocolumn[
\icmltitle{Differentially Private Bayesian Inference for  Generalized Linear Models}

\icmlsetsymbol{equal}{*}

\begin{icmlauthorlist}
\icmlauthor{Tejas Kulkarni}{aalto}
\icmlauthor{Joonas Jälkö}{aalto}
\icmlauthor{Antti Koskela}{Helsinki}
\icmlauthor{Samuel Kaski}{aalto,Manchester}
\icmlauthor{Antti Honkela}{Helsinki}

\end{icmlauthorlist}

\icmlaffiliation{aalto}{Aalto University, Finland}
\icmlaffiliation{Helsinki}{University of Helsinki, Finland}
\icmlaffiliation{Manchester}{University of Manchester, United Kingdom}

\icmlcorrespondingauthor{Tejas Kulkarni}{tejasvijaykulkarni@gmail.com}
\icmlcorrespondingauthor{Joonas Jälkö}{joonas.jalko@aalto.fi}

\icmlkeywords{Bayesian ML, Differential Privacy}

\vskip 0.3in
]

\printAffiliationsAndNotice{}  % leave blank if no need to mention equal contribution

\begin{abstract}
Generalized linear models (GLMs) such as logistic regression are among the most widely used arms in data analyst’s repertoire and often used on sensitive datasets. A large body of prior works that investigate GLMs under differential privacy (DP) constraints provide only private point estimates of the regression coefficients, and are not able to quantify parameter uncertainty.

In this work, with logistic and Poisson regression as running examples, we introduce a generic noise-aware DP Bayesian inference method for a GLM at hand, given a noisy sum of summary statistics. Quantifying uncertainty allows us to determine which of the regression coefficients are statistically significantly different from zero. We provide a previously unknown tight privacy analysis and experimentally demonstrate that the posteriors obtained from our model, while adhering to strong privacy guarantees, are close to the non-private posteriors.
\end{abstract}

\section{Introduction}

Differential privacy (DP)~\cite{DworkMNS06} provides a strong framework for protecting the privacy of data subjects against privacy violations via models trained on their personal data. DP protection requires injecting noise to the learning process. Bayesian inference is a natural complement to DP, because it seeks to quantify the impact of noise to inference result in terms of quantifying the uncertainty of the result. In our work we seek to develop a Bayesian method to perform inference under DP and quantify the uncertainty caused by the injected noise for the widely used class of regression models, generalised linear models (GLMs).
This method allows statistical inference on the regression coefficients, such as determining which coefficients can be confidently inferred to be different from zero.

Using Bayesian inference to counter the noise injected to ensure DP was first proposed by \citet{WM:10}. The process is fairly straightforward for models where the joint distribution $\Pr[\bD, \btheta, \bZ]$ over the data $\bD$, all parameters $\btheta$ and possible latent variables $\bZ$ of interest is specified as part of the model. This was demonstrated by \citet{BS:18}, who presented an efficient inference method for exponential family models of this form.

The problem becomes significantly more difficult for discriminative models such as regression models, where the model does not specify a distribution over the input data $\bX$ but only target outputs $\bY$ and parameters $\Pr[\bY, \btheta \mid \bX]$. This is because we also consider input $\bX$ as private information, and thus we cannot observe it directly. On the other hand, the model does not specify a natural prior for $\bX$ either. In order to solve this issue, we need to augment the model to include a prior for $\bX$. This was first demonstrated by \citet{BS:19}, who developed a method for linear regression by placing a Gaussian prior on $\bX$. The method relies on the ability to express the model via a sufficient statistic of fixed size, and hence only applies to linear regression.

In our work we extend this sufficient statistics based noise-aware DP Bayesian inference framework for a much broader class of regression models, GLMs. These include many of the most widely used statistical models, such as logistic regression and Poisson regression. As GLMs typically have no sufficient statistics, this is achieved by approximating the joint distribution of the inputs and outputs under the GLM using a finite number of moments as approximate sufficient statistics and fitting the model parameters to match these moments.

\para{Related work.} Linear models have received a huge amount of attention under DP since its proposal. The techniques for point estimates for regression parameters fall to five main categories, a) \emph{objective perturbation}~\citep{CMS:09,KM:11,ZZZXYW:12,INSTTW:19}, b) \emph{output perturbation}~\citep{WKCJN:16,ZKWL:17}, c) \emph{gradient perturbation}~\citep{BST:14,ACGMMTZ:16}, d) \emph{subsample and aggregate}~\citep{Smith:08,DS:10,Barrientos:19}, and e) \emph{sufficient statistic perturbation (SSP)}~\citep{MM:09,VS:09,sheffet:19,WY:18}. Other representative works that specifically study \emph{generalized linear models} (GLMs) more generally under various DP models include \citep{KST:12,JT:14,PKLSYHZ:18,WCX:19,WZGX:19}.
 Only a handful of these works, e.g.\  \citep{sheffet:19,Barrientos:19}, quantify the uncertainty in the model parameter estimates via frequentist tools such as confidence intervals and hypothesis testing. 

\par 
To quantify the uncertainty beyond the frequentist tools, many Bayesian inference techniques under DP setting have been proposed, starting from the seminal work of \citet{WM:10}. The field can be roughly clustered into three broad categories:
\begin{enumerate} \vspace{-0.2cm}
    \item \emph{sufficient statistics perturbation} based inference ~\citep{FGJWC:16,ZRD:16,HDNDK:18,BS:18,BS:19,park2020variational}
     \vspace{-0.2cm}
    \item \emph{gradient perturbation} based Markov chain Monte Carlo (MCMC)~\citep{WYFS:2015,ZRD:16,LCLC:19} and variational inference (VI)~\citep{JDH:17}
     \vspace{-0.2cm}
    \item \emph{DP posterior sampling}~\citep{DNMAR:2018,FGJWC:16,ZRD:16,HJDH:19,Yildirim2019}. 
\end{enumerate}

Categories 2 and 3 aim to provide a general-purpose solution for differentially private Bayesian inference. However, the output of these mechanisms is an approximation of the posterior distribution where the impact of added uncertainty from privacy is not quantified. Category 1 is most closely related to our work. In these works, posterior distribution of the model parameters is captured through perturbed sufficient statistics, but also here many approaches fail to quantify the impact of the added uncertainty from privacy.   

Among the several approaches proposed, inference techniques based on \emph{sufficient statistics perturbation} stand out due to their computational efficiency and accuracy~\cite{WY:18}. Unlike DP variants of general purpose MCMC methods, the privacy cost of training in a sufficient statistics based model is typically invariant to the number of iterations/posterior samples. We only pay once to perturb the sum of sufficient statistics and then rely on the post-processing property of DP to run iterative inference without additional privacy cost.

\para{Main contributions.}
The main contributions of this work are as follows:
\begin{itemize}
    \item We derive noise-aware DP Bayesian inference for GLMs based on approximate sufficient statistics from low-order polynomials.
    \vspace{-0.2cm}
    \item We prove tight $(\epsilon, \delta)$-DP bounds for releasing the approximate sufficient statistics.
    \vspace{-0.2cm}
    \item We demonstrate accurate privacy-preserving inference of which regression coefficients are significantly different from zero.
    \vspace{-0.2cm}
    \item We demonstrate high degree of similarity between the privacy-preserving and non-private posterior distributions for GLMs even under strong privacy for moderately sized data. 
\end{itemize}

\section{Background}

\subsection{Differential Privacy (DP)}
\label{sec:DP}
Assume a generic dataset $\bD \in \mathbb{R}^{N \times d}$ containing $d$-dimensional records of $N$ individuals. We define neighbourhood relation $\bD \sim \bD'$ when $\bD'$ can be obtained from $\bD$ by replacing a single record. \citet{DworkMNS06} proposed the following notion:

\begin{definition}
For $\epsilon \geq 0, \delta \geq 0$, a randomized mechanism $\mathcal{M}$ satisfies $(\epsilon,\delta)$-differential privacy if for all neighbouring datasets $\bD \sim \bD'$, and for all outputs $O \subseteq Range(\mathcal{M})$, the following constraint holds: 
\begin{equation}\label{eq:dp}
    \Pr[\mathcal{M}(\bD) \in O] \leq  \exp(\epsilon) \times \Pr[\mathcal{M}(\bD') \in O]  + \delta.
\end{equation}
\end{definition}

Lower values of $\epsilon$ and $\delta$ provide stronger privacy.
\par
The condition~\eqref{eq:dp} can often be satisfied, for example, by adding Gaussian noise to a function of the dataset
such that every individual's contribution is masked.
A key property of DP is its robustness to post-processing: the privacy loss of $\mathcal{M}$ cannot be increased by applying any randomized function independent of the data to $\mathcal{M}$'s output.

The concept of \emph{sensitivity} measures the worst-case impact of an individual's record on the output of a function.

\begin{definition}
The $L_2$-sensitivity $\Delta_{\bt}$ of a function $\bt:\mathbb{R}^{N \times d} \rightarrow \mathbb{R}^{m}$ is defined as $
\Delta_{\bt} = \max_{\bD \sim \bD'}|| \bt(\bD) - \bt(\bD')||_{2}$.
\end{definition}

\para{Analytic Gaussian Mechanism~\cite{analyticGaussian}.}
\citet{analyticGaussian} proposed an algorithmic noise calibration strategy based on the Gaussian cumulative density function (CDF) to obtain a mechanism that adds the least amount of Gaussian noise needed for $(\epsilon, \delta)$-DP.  

\begin{definition}{(Analytic Gaussian Mechanism)} For any $\epsilon \geq 0, \delta \in [0,1]$, a mechanism $\mathcal{M}(\bD) = \bt(\bD) + \zeta$ with sensitivity $\Delta_{\bt}$ satisfies $(\epsilon,\delta)$-DP with $\zeta \sim \mathcal{N}(0,\sigma^{2} I)$ iff
\begin{equation} \label{eq:agm}
\begin{aligned} 
    \Phi\left( \frac{\Delta_{\bt}}{2\sigma}- \frac{\epsilon \sigma}{\Delta_{\bt}} \right) -  \exp(\epsilon)\Phi\left(- \frac{\Delta_{\bt}}{2\sigma}- \frac{\epsilon \sigma}{\Delta_{\bt}} \right)  \leq \delta.
\end{aligned}
\end{equation}
\end{definition}

We use the
implementation based on Algorithm 1 of \citet{analyticGaussian} to find a minimal $\sigma$ that satisfies the condition~\eqref{eq:agm}.

\subsection{Bayesian inference based on sufficient statistics} 
For certain statistical models, the information about data needed for parameter inference can be captured by a limited number of \emph{sufficient statistics}. Sufficient statistics are available for exponential family models, such as linear regression.
For Bayesian linear regression, the sufficient statistics are $\bs=\sum_{i=1}^{N} \bt(\bx_i,y_i)=[\bX^{T}\bX,\bX^{T}\by,\by^{T}\by]$, where $\bX \in \mathbb{R}^{N \times d}$. With access to $\bs$ ($\mathcal{O}(N)$ operation), we can evaluate the total log-likelihood $\log(\prod_{i=1}^{N}\Pr[y_i | \bx_i, \btheta]) =\sum_{i=1}^{N} \log(y_i | \bx_i, \btheta)$ or its gradients in nearly a constant time. As a consequence, running time of a training algorithm taking $K$ passes over data reduces substantially to $\mathcal{O}(N+K)$ from $\mathcal{O}(NK)$.

\subsection{Privacy-preserving posterior inference with sufficient statistics} 
Consider a conditional probabilistic model $\Pr[\by | \btheta, \bX]$ where
the information in $(\bX,\by)$ needed for inference of $\btheta$ can be represented by sufficient statistics $\bs \in \mathbb{R}^{m}$. Let $\btheta \in \mathbb{R}^{d}$ denote the regression parameters. In order to adapt the model for DP,
we need to make additional assumptions on $\bX$. Following \citet{BS:19}, we assume $\bX \sim \mathcal{N}(\bzero, \bSigma)$, where $\bSigma \in \mathbb{R}^{d \times d}$ is an additional parameter denoting the input data covariance matrix. We guarantee privacy by operating solely on the perturbed sufficient statistics, denoted by $\bz \in \mathbb{R}^{m}$. The noise-aware joint posterior distribution of the model parameters $\btheta, \bSigma$, given noisy sufficient statistics $\bz$ is  

\begin{equation}
\begin{aligned}
 \Pr[\btheta,& \bSigma| \bz] \propto \Pr[\btheta, \bSigma, \bz] =\int_{\bs} \Pr[\btheta,\bSigma,\bs, \bz] \diff\bs \\ &= \int_{\bs} \Pr[\btheta] \Pr[\bSigma] \Pr[\bs\mid \btheta,\bSigma]  \Pr[\bz \mid \bs] \diff\bs,   \label{eq:priv_model_2} 
\end{aligned}
\end{equation}

where $\Pr[\btheta]$ and $\Pr[\bSigma]$ are the priors for model parameters and the privacy inducing noise is quantified by the term $\Pr[\bz \mid \bs]$. Figure~\ref{fig:glm} depicts above model. The remaining challenge is to define the probabilistic model for the latent sufficient statistics $\bs$.

\para{Normal approximation of $\bs$.} We obtain the sufficient statistic distribution $\Pr[\bs \mid \btheta,\bSigma]$ by marginalizing over the data: $\Pr[\bs\mid \btheta,  \bSigma] = \int_{\bX,\by: \bt(\bX,\by)=\bs} \Pr[\bX,\by \mid \btheta,  \bSigma] \diff\bX \diff\by$. However, this integral is in general intractable due to (possibly infinite) number of combinations of $\bX \in \mathbb{R}^{N \times d},\by \in \mathbb{R}^{N}$ that produce the sufficient statistic $\bs$. As $\bs$ is a sum of individual sufficient statistics $\bt(\bx_i, y_i)$, \citet{BS:19} proposed to approximate $\Pr[\bs \mid \btheta, \bSigma]$ as a multivariate normal distribution $\mathcal{N}(\bs \mid N\bmu_{\bs}, N\bSigma_{\bs})$, according to the central limit theorem,\footnote{The central limit theorem ensures the asymptotic accuracy of this approximation.} with mean $\bmu_{\bs}=\mathbb{E}[\bs]$ and covariance $\bSigma_{\bs}=\cov[\bs]$.

\subsection{Generalized linear models}
Generalized linear models~(GLMs, \citealp{glm}) include some of the most commonly used statistical models. GLMs extend linear regression by allowing for the possibility of more general outcome distributions such as binary, count,
and heavy-tailed observations, and using a linear model for the
mean parameter of the outcome distribution.

Denoting the input $\bx \in \mathbb{R}^{d}$ and unknown regression parameter $\btheta \in \mathbb{R}^{d}$, GLMs use a link function $g$ to associate the linear model $\bx^{T}\btheta$ to the mean of a response variable $y \in \mathbb{R}$ as $\mathbb{E}[y]=\mu=g^{-1}(\bx^{T}\btheta)$, where $g^{-1}:\mathbb{R} \rightarrow \mathbb{R}$ is the inverse link function. Typical examples of GLMs include logistic regression with binomial model for $y$ and logistic link $g(\mu) = \log(\frac{\mu}{1-\mu})$, as well as Poisson regression with Poisson distribution for $y$, usually combined with the log-link $g(\mu) = \log(\mu)$.

GLMs do not generally admit finite sufficient statistics.
\citet{PASS:17} propose the PASS-GLM framework to develop polynomial approximations of degree $m$
to GLMs that admit sufficient statistics. Sufficient statistics for such a polynomial
approximation can be seen as summary statistics or approximate sufficient statistics
for the original GLM.

\begin{figure}[ht]
\centering

\begin{tikzpicture}
\tikzstyle{main}=[circle, minimum size = 5, thick, draw =black!80, node distance = 4mm]
\tikzstyle{connect}=[-latex,  thick]
\tikzstyle{box}=[rectangle, draw=black!100,fill=blue!40]
  \node[main,line width=0.1mm] (Sigma) [] { $\bSigma$};  \node[main,line width=0.1mm] (theta) [left=of Sigma] { $\btheta$}; 
 
  \node[main,line width=0.1mm] (X) [below=of Sigma][] {$\bX$}; 
  \node[main,line width=0.1mm] (Y) [below=of theta][] {$\by$}; 
  \node[main,line width=0.1mm] (s) at ( -0.8,-2.0) [circle,draw=black!100] {$\bs$};
  \node[main] (z) [below=of s][fill=black!40] {$\bz$};     
  \path (Sigma) edge [connect] (X);
  \path (X) edge [connect] (Y);
  \path (theta) edge [connect] (Y);
  \path (X) edge [connect] (s);
  \path (Y) edge [connect] (s);
  \path (s) edge [connect] (z);   
 \node[rectangle, inner sep=3.0mm, fit= (z)(s)(theta)(Sigma),draw=black, xshift=0mm] {};
\end{tikzpicture}
\caption{Differentially Private Bayesian GLM} \label{fig:glm}
\end{figure}
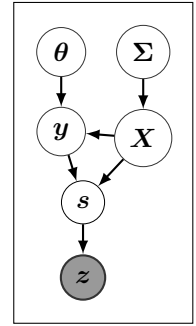

\section{Privacy-preserving Bayesian inference for GLMs}

\subsection{Model and problem formulation}
We consider the usual centralised DP setting where a dataset $\bD= \{\bX,\by\}=\{\bx_i,y_i\}_{i=1}^{N}$ is a multiset of $N$ observations. 
Motivated by the PASS-GLM approach, we summarise the data using low-order moments of $\bD$ that are useful for inference of the particular GLM.

The data holder computes the summary statistics $\bs$ for $\bD$ and releases a perturbed version $\bz = \bs + \zeta =  \sum_{i=1}^{N}\bt(\bx_i,y_i)+\zeta$ with noise $\zeta$ drawn from the analytic Gaussian mechanism defined in Section~\ref{sec:DP}.
The data holder also releases the details of the target GLM and DP mechanism. With access to $\bz$ and the noise mechanism, our goal is to design a noise-aware method for inferring the posterior distributions of the model parameters $\btheta$ and $\bSigma$ for representative GLMs --- logistic and Poisson regression.

\begin{figure*}[t]
    \centering        
$\bt(\boldsymbol{x},y)= \Big[1,   x_1y , x_2y,x_3y, x_4y,  x^{2}_1y^{2} ,  x^{2}_2y^{2},x^{2}_3y^{2}, x^{2}_4y^{2}, x_1 x_2 y^{2},x_1 x_3y^{2} ,x_1  x_4y^{2},x_2 x_3y^{2},x_2 x_4y^{2}, x_3  x_4 y^{2} \Big]$ 
    \caption{An example of a second order (i.e. $m=2$) approximate sufficient statistic $\bt (\bx,y)$ for logistic regression when $d=4$.}
    \label{fig:ss}
\end{figure*}

\subsection{Normal approximation of the summary statistics}\label{sec:NA}
Following \citet{BS:19}, we model $\bx_i \in \mathbb{R}^{d}$ as $\bx_i \sim \mathcal{N}^{d}(\bzero,\bSigma)$,
with an unknown covariance matrix $\bSigma \in \mathbb{R}^{d \times d}$.

Recall that we approximate the distribution of the summary statistics $\Pr[\bs \mid \btheta,\bSigma]$ with a normal approximation $\Pr[\bs \mid \btheta,\bSigma] \approx \mathcal{N}(\bs \mid N \bmu_{\bs}, N \bSigma_{\bs})$.
Next we show how to obtain the mean and covariance of the summary statistics analytically under our model. Note that we could also estimate these moments numerically, which might be easier for certain models, but creates an extra computational cost.

\para{Closed forms for the entries of $\bmu_{\bs}$ and $\bSigma_{\bs}$ for logistic regression.} 

Assuming $y_i \in \{-1, 1\}$, the logistic regression model can be written as
\begin{equation}
    \Pr[y_i | \bx_i, \btheta] = \sigma(\bx_i^T \btheta)^{\frac{1+y_i }{2}} (1-\sigma(\bx_i^T \btheta))^{\frac{1-y_i}{2}}, 
    \label{eq:logistic}
\end{equation}
where $\sigma(x) = \frac{1}{1+\exp(-x)}$ denotes the sigmoid function. Motivated by the success of a second order PASS-GLM approximation for logistic regression, we use the same summary statistic, presented in Fig.~\ref{fig:ss}.
We begin our calculations by computing the $a$-th moment of $y$:

\begin{equation} \label{eq:higher_moment_y}
\begin{aligned}
\mathbb{E}[y^{a} \mid \bx] &= (-1)^{a} \left(\tfrac{\exp(\bx^{T} \btheta)}{1+\exp(\bx^{T}\btheta)}\right) + (1)^{a}\tfrac{1}{1+\exp(\bx^{T}\btheta)} \\ 
&=
\begin{cases}
        1, &\text{ for even a's} \\
        \tfrac{1-\exp(\bx^{T} \btheta)}{1+\exp(\bx^{T} \btheta)}, & \text{ for odd a's} .
\end{cases} 
\end{aligned}
\end{equation}

The entries in $\bmu_{\bs}$ and $\bSigma_{\bs}$ are indexed by the non-negative integer exponents $a,b,c,d$ such that $a+b=m, c+d = m,m \leq 2$. Therefore, for all $i,j$th co-ordinates in $\bx$, the corresponding entries in $\bmu_{\bs}$ and $\bSigma_{\bs}$ can be populated using Equation~\ref{eq:higher_moment_y} as below:
\begin{enumerate}
    \item $\mathbb{E}[x_i^{a}y^{a}x_j^{b}y^{b}] = \mathbb{E}_{\bx}[x_i^{a}x_j^{b} \mathbb{E}_{y | \bx} [y^{a+b}]] \\
    =\begin{cases}
        \mathbb{E}_{\bx}[x_i^{a}x_j^{b}], & \text{ for even a+b's} \\
        \mathbb{E}_{\bx}[x_i^{a}x_j^{b}\frac{1-\exp(\bx^{T}\btheta)}{1+\exp(\bx^{T}\btheta)}] & \text{ for odd a+b's} 
    \end{cases}
    $
    
     \item $\cov[x^{a}_i x^{b}_j y^{a+b},x^{c}_k x^{d}_l y^{c+d}] = \mathbb{E}_{\bx}[x_i^{a} x_j^{b} x_k^{c} x_l^{d} y^{a+b+c+d}]  - \mathbb{E}[x_i^{a} x_j^{b} y^{a+b}] \mathbb{E}[x_k^{c} x_l^{d} y^{c+d}]$

     We can further simplify the covariance entries based on the parity of $a+b+c+d$.
    \begin{enumerate}
        \item When both $a+b$ and $c+d$ are even.
        \begin{equation*}
        \begin{aligned}
            & \cov[x^{a}_i x^{b}_j y^{a+b},x^{c}_k x^{d}_l y^{c+d}] \\ &= \mathbb{E}_{\bx}[x_i^{a} x_j^{b} x_k^{c} x_l^{d} ]-\mathbb{E}_{\bx}[x_i^{a} x_j^{b} ] \mathbb{E}_{\bx}[x_k^{c} x_l^{d}]
        \end{aligned}
        \end{equation*}
        \item When both $a+b$ and $c+d$ are odd.
        \begin{equation*}
        \begin{aligned}
            & \cov[x^{a}_i x^{b}_j y^{a+b},x^{c}_k x^{d}_l y^{c+d}] \\ &= \mathbb{E}_{\bx}[x_i^{a} x_j^{b} x_k^{c} x_l^{d} \Big( \frac{1-\exp(\bx^{T} \btheta)}{1+\exp(\bx^{T} \btheta)}\Big) ] \\ &- \mathbb{E}_{\bx}[x_i^{a} x_j^{b} \Big( \frac{1-\exp(\bx^{T} \btheta)}{1+\exp(\bx^{T} \btheta)}\Big) ] \\ & \mathbb{E}_{\bx}[x_k^{c} x_l^{d} \Big( \frac{1-\exp(\bx^{T} \btheta)}{1+\exp(\bx^{T} \btheta)}\Big)]
        \end{aligned}
        \end{equation*}
        \item When $a+b$ is even and $c+d$ is odd.
        \begin{equation*}
        \begin{aligned}
        &    \cov[x^{a}_i x^{b}_j y^{a+b},x^{c}_k x^{d}_l y^{c+d}] \\ &= \mathbb{E}_{\bx}[x_i^{a} x_j^{b} x_k^{c} x_l^{d} \Big( \frac{1-\exp(\bx^{T} \btheta)}{1+\exp(\bx^{T} \btheta)}\Big)] \\ & -\mathbb{E}_{\bx}[x_i^{a} x_j^{b} ] \mathbb{E}_{\bx}\Big[x_k^{c} x_l^{d} \Big( \frac{1-\exp(\bx^{T} \btheta)}{1+\exp(\bx^{T} \btheta)}\Big)\Big]
         \end{aligned}
        \end{equation*}
        \item Case $a+b$ is odd and $c+d$ is even follows identically from the previous case.
    \end{enumerate}
\end{enumerate}
\para{Taylor series expansion.} 
The non-linear term $\frac{1-\exp(\bx^{T} \btheta)}{1+\exp(\bx^{T} \btheta)}$ makes the expectation in previous formulas intractable. We approximate this using a truncated Taylor series.
The first two terms of the Taylor series approximation for $\frac{1-\exp(\bx^{T}\btheta) }{1+\exp(\bx^{T}\btheta)}$ are  $-\frac{\bx^{T}\btheta}{2}+\frac{(\bx^{T}\btheta)^{3}}{24}$.
This approximation is reasonably accurate as long as $\bx^{T}\btheta \in [-1,1]$. We now approximate one of the expectations from the cases above:
\begin{align*}
& \mathbb{E}_{\bx}[x_i^{a} x_j^{b} x_k^{c} x_l^{d} \Big( \frac{1-\exp(\bx^{T} \btheta)}{1+\exp(\bx^{T} \btheta)}\Big)] \\ & \approx \mathbb{E}_{\bx}[x_i^{a} x_j^{b} x_k^{c} x_l^{d} \Big[-\dfrac{\bx^{T}\btheta}{2}+\dfrac{(\bx^{T}\btheta)^{3}}{24}\Big]] \\ &= -\frac{\mathbb{E}_{\bx}[x_i^{a} x_j^{b} x_k^{c} x_l^{d} (\bx^{T}\btheta) ]}{2} + \frac{\mathbb{E}_{\bx}[x_i^{a} x_j^{b} x_k^{c} x_l^{d} (\bx^{T}\btheta)^{3} ]}{24} \\ &= \frac{-\sum_{n=1}^{d} \theta_{n} \mathbb{E}_{\bx}[x^{a}_{i} x^{b}_{j}x^{c}_{k}x^{d}_{l} x_{n} ]}{2} + \\ &  \frac{\sum_{\be:\sum_{o=1}^{d}e_{o}=3} {d  \choose \be} (\prod_{n=1}^{d} \theta_{n}^{e_n})  (\mathbb{E}_{\bx}[x^{a}_{i} x^{b}_{j}x^{c}_{k}x^{d}_{l} \prod_{n=1}^{d}  x_{n}^{e_n}])  }{24}.
\end{align*}
In the derivation above, the term $(\bx^{T}\btheta)^{3}$ is expanded using the multinomial theorem.  We can approximate $\mathbb{E}_{\bx}[x_{i}^a x_{j}^b \Big(\frac{1-\exp(\bx^{T} \btheta)}{1+\exp(\bx^{T} \btheta)}\Big) ]$ using similar calculations. The corresponding calculations for Poisson regression are found in the Supplement. 

\para{Evaluation of higher-order Gaussian moments.} Now that our closed forms for the entries of $\bmu_{\bs}$ and $\bSigma_{\bs}$ only include the sum of monomials, what remains is the actual evaluation of these Gaussian moments. We use the \emph{Isserlis' theorem}~\cite{isserlis} to compute these moments. This theorem presents even-degree moments of a zero-centered multivariate Gaussian variable as a sum of products of $\bSigma$ entries. 

\begin{theorem} {Isserlis' theorem~\cite{isserlis}.}\label{thm:isserlis} Let $\bx \sim \mathcal{N}^{d}(\bzero,\bSigma)$ be a $d-$dimensional random variable. Then
\begin{equation*}
\begin{aligned}
    \mathbb{E}[x_1,\cdots,x_d]= \sum_{p \in P^{2}_{d}} \prod_{\{i,j\} \in p} \mathbb{E}[x_ix_j]=\sum_{p \in P^{2}_{d}} \prod_{\{i,j\} \in p} \Sigma_{ij},
\end{aligned}
\end{equation*}
 where $d$ is assumed to be an even number and $P$ is the set of all possible ways of partitioning $\{1,\cdots,d\}$ in to pairs $\{i,j\}$. For odd d's, $\mathbb{E}[x_1,\cdots,x_d]=0$. 
\end{theorem}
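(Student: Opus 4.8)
The plan is to read the product moment $\mathbb{E}[x_1\cdots x_d]$ (the bracket in the statement denotes the product of the $d$ listed, not necessarily distinct, coordinates of $\bx$) off the moment generating function of the Gaussian. Recall that for $\bx\sim\mathcal{N}^{d}(\bzero,\bSigma)$ we have $\mathbb{E}[\exp(\bt^{T}\bx)] = \exp(g(\bt))$ with $g(\bt) = \tfrac12\,\bt^{T}\bSigma\bt$, an entire function of $\bt$, so that differentiation under the expectation is legitimate and $\mathbb{E}[x_1\cdots x_d] = \partial_{t_1}\cdots\partial_{t_d}\exp(g(\bt))\big|_{\bt = \bzero}$. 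The whole statement thus reduces to evaluating this mixed partial derivative.

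First I would record which derivatives of $g$ can occur: $\partial_{t_i}g(\bt) = (\bSigma\bt)_i$, $\partial_{t_i}\partial_{t_j}g(\bt) = \Sigma_{ij}$, and every mixed partial of $g$ of order at least $3$ vanishes identically. Next, by an easy induction on the number of derivative operators already applied (repeatedly using the product rule on expressions of the form $(\text{polynomial in derivatives of }g)\cdot\exp(g)$), one obtains that $\partial_{t_1}\cdots\partial_{t_d}\exp(g(\bt))$ equals $\exp(g(\bt))$ times a sum, over all set partitions $\pi$ of $\{1,\dots,d\}$, of $\prod_{B\in\pi}\big(\partial_{(B)}g\big)(\bt)$, where $\partial_{(B)}g$ denotes the mixed partial of $g$ in the variables indexed by the block $B$.

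Then I would evaluate at $\bt = \bzero$: there $\exp(g(\bzero)) = 1$; a block of size $1$ contributes $\partial_{t_i}g(\bzero) = 0$; a block of size at least $3$ contributes $0$ by the first step; hence the only partitions that survive are those all of whose blocks have size exactly $2$, i.e. the pair partitions $p\in P^2_d$, each contributing $\prod_{\{i,j\}\in p}\Sigma_{ij}$. This is precisely the claimed identity, and when $d$ is odd there is no pair partition, so the moment is $0$.

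The computation is mostly bookkeeping, so the only step deserving care is the inductive identification of the expansion of $\partial_{t_1}\cdots\partial_{t_d}\exp(g)$ with a sum over set partitions. An equivalent, arguably cleaner, route that sidesteps this is Gaussian integration by parts (Stein's identity): for polynomial $h$ and invertible $\bSigma$, integrating against $\exp(-\tfrac12\bx^{T}\bSigma^{-1}\bx)$ by parts gives $\mathbb{E}[x_1\,h(\bx)] = \sum_{j}\Sigma_{1j}\,\mathbb{E}[\partial_{x_j}h(\bx)]$; applying this with $h(\bx) = x_2\cdots x_d$ yields the recursion $\mathbb{E}[x_1\cdots x_d] = \sum_{j=2}^{d}\Sigma_{1j}\,\mathbb{E}\big[\prod_{k\in\{2,\dots,d\}\setminus\{j\}}x_k\big]$, which matches the recursion satisfied by pair partitions (choose the partner of $1$, then pair-partition the rest), so induction on $d$ with base cases $d=1$ (giving $0$) and $d=2$ (giving $\Sigma_{12}$) finishes the argument. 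Either way one notes that repeated indices among the listed coordinates (as in our monomials $x_i^{a}x_j^{b}\cdots$) cause no change, and that a singular $\bSigma$ is handled either directly --- the generating-function argument only needs $\bSigma$ positive semidefinite --- or by continuity of both sides in $\bSigma$.
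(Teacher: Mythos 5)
Your proof is correct, but note that the paper itself does not prove this statement at all: Isserlis' theorem is quoted as a classical result (cited to Isserlis), and the supplement only illustrates its use with worked examples such as $\mathbb{E}[x_1^2x_2^2]=\Sigma_{11}\Sigma_{22}+2\Sigma_{12}^2$. So there is no in-paper argument to compare against, and what you supply is a self-contained derivation. Both routes you sketch are standard and sound: the moment-generating-function route correctly reduces everything to the facts that $\partial_{t_i}\partial_{t_j}g=\Sigma_{ij}$ and all higher mixed partials of $g(\bt)=\tfrac12\bt^{T}\bSigma\bt$ vanish, so that in the set-partition expansion of $\partial_{t_1}\cdots\partial_{t_d}\exp(g)$ only pair partitions survive at $\bt=\bzero$ (and none exist for odd $d$); the Stein/integration-by-parts route gives the same pair-partition recursion by induction on $d$ and is arguably the cleaner way to avoid the multivariate Fa\`a di Bruno bookkeeping, which is the one step in your first route you would need to spell out carefully (the induction on the number of applied derivatives, keeping track of how the product rule either creates a new singleton block or enlarges an existing one). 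You also correctly flag the two points that matter for how the theorem is actually used in the paper --- repeated indices in monomials like $x_i^{a}x_j^{b}x_k^{c}x_l^{d}$ are handled by treating list positions as labels, and a merely positive semidefinite $\bSigma$ is covered either directly by the MGF argument or by continuity --- so the proof applies verbatim to the moment computations in Section 3.2 and the supplement.
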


A few examples of moment's calculations using Isserlis' theorem are found in the Supplement.  

\para{Computational complexity for moment population.} For a normal approximation of a second order summary statistics, we require moments of degree $2,4$, and $6$ to populate $\frac{(d+1)^{2}(d+2)^{2}}{8} + \frac{(d+1)(d+2)}{2} \in \mathcal{O}(\frac{d^{4}}{8})$ unique entries in $\bmu_{\bs},\bSigma_{\bs}$.
Under a normal model for $\bX$, these can all be computed from the covariance of $\bX$.

\begin{proposition}
\label{prop:isserlis_count}
To evaluate a moment with degree $2k$, Theorem~\ref{thm:isserlis} generates $\frac{(2k-1)!}{2^{k-1}(k-1)!}$ partitions (number of summands) each containing $k$ entries.
\end{proposition}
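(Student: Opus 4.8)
\emph{Proof plan.} The claim has two parts: that each summand produced by Theorem~\ref{thm:isserlis} is a product of exactly $k$ factors, and that there are $\frac{(2k-1)!}{2^{k-1}(k-1)!}$ such summands. The first part is immediate from the statement of the theorem: the summands are indexed by $p \in P^2_{2k}$, the pair partitions of $\{1,\dots,2k\}$, and a partition of a $2k$-element set into (unordered) pairs necessarily consists of $k$ pairs, each contributing one factor $\Sigma_{ij}$. So the substance of the proposition is the cardinality $|P^2_{2k}|$.

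To count $|P^2_{2k}|$ I would use a greedy bijective construction. Build a pair partition by first choosing the partner of the element $1$: there are $2k-1$ choices. Delete both matched elements and repeat with the smallest remaining index, which now has $2k-3$ candidates, and so on, until two elements remain with a single choice. Since at every step the element being matched is forced to be the currently smallest unmatched index, distinct sequences of choices yield distinct partitions and every partition arises exactly once; hence $|P^2_{2k}| = (2k-1)(2k-3)\cdots 3\cdot 1 = (2k-1)!!$. (As a cross-check one can instead line up all $2k$ elements in one of $(2k)!$ orders and read off the pairs $\{1,2\},\{3,4\},\dots$; each partition is obtained $2^k k!$ times --- $2^k$ for the internal order within the $k$ pairs and $k!$ for the order of the pairs themselves --- giving $|P^2_{2k}| = (2k)!/(2^k k!)$.)

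Finally I would convert $(2k-1)!!$ to the stated closed form by a one-line manipulation: multiplying the double factorial by $2^{k-1}(k-1)! = (2k-2)(2k-4)\cdots 2$ restores exactly the missing even factors below $2k$, so
\[
(2k-1)!!\cdot 2^{k-1}(k-1)! = (2k-1)(2k-2)\cdots 2\cdot 1 = (2k-1)!,
\]
whence $|P^2_{2k}| = (2k-1)!! = \frac{(2k-1)!}{2^{k-1}(k-1)!}$, as claimed. There is no real obstacle here; the only point requiring a sentence of care is justifying that the greedy construction (or, alternatively, the line-ordering argument) counts each pair partition with exactly the right multiplicity.
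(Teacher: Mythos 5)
Your proof is correct and complete. The paper states Proposition~\ref{prop:isserlis_count} without giving a proof (it is the standard count of pair partitions, i.e.\ perfect matchings, of a $2k$-element set), so there is no in-paper argument to compare against step by step; your write-up supplies exactly what is missing. The observation that every $p \in P^2_{2k}$ has exactly $k$ blocks, hence each summand is a product of $k$ entries $\Sigma_{ij}$, settles the second clause, and the greedy construction giving $(2k-1)(2k-3)\cdots 3\cdot 1=(2k-1)!!$ is sound, with your cross-check $(2k)!/(2^k k!)$ agreeing. The conversion $(2k-1)!!\cdot 2^{k-1}(k-1)! = (2k-1)!$ is a correct identity (the factor $2^{k-1}(k-1)!$ is precisely the product of the even numbers $2,4,\ldots,2k-2$), and it matches the paper's stated form, which in turn reproduces the counts $1, 3, 15$ for degrees $2, 4, 6$ used in the paper's complexity discussion. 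The one point needing care is the one you already flag: in the greedy construction the element being matched at each step must be forced (say, the smallest unmatched index), which guarantees each pair partition is produced exactly once; with that remark the count is exact and the proof stands.
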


Applying Proposition~\ref{prop:isserlis_count}, we calculate that for a single degree $2,4$ and $6$ moment, we need to perform at-most 1, 6, and 45 unique multiplications. However, modern hardware can compute each moment in nearly a constant time with clever caching and indexing tricks. So a very loose upper bound on the order of operations performed in each iteration is $\mathcal{O}(d^{4})$.

\subsection{Satisfying DP}
The last step in our model is to define $\Pr[\bz|\bs]$. In order to bound the global sensitivity, we assume that each input instance has a bounded $L_2$-norm, i.e. $||\bx||_2 \leq R$.

\subsubsection{Logistic regression}
\label{sec:sensitivity}
\para{Sensitivity analysis.} Recall that the approximate sufficient statistics for logistic regression contain both linear and quadratic terms.
To this end, we define the functions  $t_1:\mathbb{R}^{ d} \rightarrow \mathbb{R}^{d}$ and
$t_2:\mathbb{R}^{ d} \rightarrow \mathbb{R}^{{d + 2 \choose 2}}$ as
\begin{equation} \label{eq:t_12}
    \begin{aligned}
        t_1(\bx) &= \bx, \\
        t_2(\bx) &= \begin{bmatrix} x_1^2  \! &  \! \! \ldots \! \! & \!  x_d^2 
         &  \! \sqrt{2}  \! x_1x_2 & \! \! \ldots \! \! &  \! \sqrt{2} x_{d-1} x_d \end{bmatrix}^T.
    \end{aligned}
\end{equation}
Using this notation, the approximate sufficient statistics are given as $[1, y t_1(\bx), y^2 t_2(\bx)]$, which 
due to $y \in \{-1, 1\}$ yields $t(\bx, y) = [1, yt_1(\bx), t_2(\bx)]$.
We consider a Gaussian mechanism where we release the linear and quadratic terms simultaneously.
When compared to individual releases of
$y t_1(\bx)$ and $t_2(\bx)$, this leads to a better utility.

\begin{lemma} \label{lem:t_12}
Let $t_1$ and $t_2$ be defined as in \eqref{eq:t_12} and let $\sigma_1,\sigma_2>0$.
Consider the mechanism 
$$
\mathcal{M}(\bx) = \begin{bmatrix} y t_1(\bx) \\ t_2(\bx)  \end{bmatrix} + 
\mathcal{N}\left(0, \begin{bmatrix} \sigma_1^2 I_d & 0 \\ 0 & \sigma_2^2 I_{d_2} \end{bmatrix} \right),
$$
where $d_2 = {d + 2 \choose 2}$. 
Assuming $|| \bx ||_2 \leq R$, the tight $(\epsilon,\delta)$-DP for $\mathcal{M}$ 
is obtained by considering a Gaussian mechanism with noise variance $\sigma_1^2$ and sensitivity
$$
\Delta = \sqrt{  \frac{\sigma_2^2}{2 \sigma_1^2} + 2 R^2 + 2 \frac{\sigma_1^2}{ \sigma_2^2} R^4    }.
$$
\end{lemma}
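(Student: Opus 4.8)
The plan is to reduce the block-diagonal Gaussian mechanism to an ordinary isotropic one by whitening, and then to compute the resulting scalar sensitivity by an explicit optimization. Write $\bSigma_\zeta = \mathrm{diag}(\sigma_1^2\bI_d,\sigma_2^2\bI_{d_2})$ for the noise covariance and $f(\bx,y) = [\,yt_1(\bx);\,t_2(\bx)\,]$ for the per-record statistic, so $\mathcal{M}(\bD) = \sum_i f(\bx_i,y_i) + \mathcal{N}(0,\bSigma_\zeta)$. Multiplying the released vector by $\bSigma_\zeta^{-1/2}$ is an invertible, data-independent post-processing, hence $\mathcal{M}$ has exactly the same $(\epsilon,\delta)$ profile as $\bD\mapsto \bSigma_\zeta^{-1/2}\sum_i f(\bx_i,y_i) + \mathcal{N}(0,\bI)$, which is a standard Gaussian mechanism with unit noise and $L_2$-sensitivity $\eta := \sup_{\bD\sim\bD'}\norm{\bSigma_\zeta^{-1/2}\big(\sum_i f - \sum_i f'\big)}_2$. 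Because the analytic Gaussian mechanism condition \eqref{eq:agm} depends on $(\Delta,\sigma)$ only through the ratio $\Delta/\sigma$, this is in turn equivalent to a Gaussian mechanism with noise variance $\sigma_1^2$ and sensitivity $\Delta=\sigma_1\eta$. So it suffices to show $\sigma_1^2\eta^2$ equals the expression in the statement, after which \eqref{eq:agm} applies verbatim.

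Next I would reduce $\eta$ to a three-parameter optimization. Since the statistic is additive over records and neighbours differ in one record, the constant coordinate cancels and $\eta^2 = \sup\big(\sigma_1^{-2}\norm{yt_1(\bx)-y't_1(\bx')}_2^2 + \sigma_2^{-2}\norm{t_2(\bx)-t_2(\bx')}_2^2\big)$ over $\norm{\bx}_2,\norm{\bx'}_2\le R$ and $y,y'\in\{-1,1\}$. Two facts about $t_2$ make this tractable: $t_2$ is even, so writing $\boldsymbol u := y\bx$ (hence $\bx = y\boldsymbol u$ and $\norm{\boldsymbol u}_2=\norm{\bx}_2$, and $\boldsymbol u$ ranges freely over the radius-$R$ ball) gives $t_2(\bx)=t_2(\boldsymbol u)$ and $yt_1(\bx)=\boldsymbol u$; and the $\sqrt2$-scaling on the off-diagonal entries is chosen precisely so that $\langle t_2(\boldsymbol u),t_2(\boldsymbol u')\rangle = \langle\boldsymbol u,\boldsymbol u'\rangle^2$, in particular $\norm{t_2(\boldsymbol u)}_2^2 = \norm{\boldsymbol u}_2^4$. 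Expanding the squared norms and setting $a=\norm{\boldsymbol u}_2^2$, $b=\norm{\boldsymbol u'}_2^2$, $c=\langle\boldsymbol u,\boldsymbol u'\rangle$ turns the problem into
\begin{equation*}
\eta^2 = \max_{a,b\in[0,R^2],\ c^2\le ab}\ \sigma_1^{-2}(a+b-2c) + \sigma_2^{-2}(a^2+b^2-2c^2),
\end{equation*}
the only coupling being the Cauchy--Schwarz constraint $c\in[-\sqrt{ab},\sqrt{ab}]$, which is attainable for every such triple.

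Finally I would solve this. The objective is increasing in $a$ and in $b$, so the maximum is at $a=b=R^2$, leaving the concave quadratic $c\mapsto -2\sigma_1^{-2}c-2\sigma_2^{-2}c^2$ on $c\in[-R^2,R^2]$, whose unconstrained maximizer is $c^\star=-\sigma_2^2/(2\sigma_1^2)$. In the regime $\sigma_2^2\le 2\sigma_1^2 R^2$ this lies in the interval; substituting $a=b=R^2$, $c=c^\star$ gives $\eta^2 = 2R^2/\sigma_1^2 + \sigma_2^2/(2\sigma_1^4) + 2R^4/\sigma_2^2$, whence $\Delta^2 = \sigma_1^2\eta^2 = \sigma_2^2/(2\sigma_1^2) + 2R^2 + 2(\sigma_1^2/\sigma_2^2)R^4$, as claimed. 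Tightness holds because the supremum is attained: take $\boldsymbol u = R e_1$ and any $\boldsymbol u'$ with $\norm{\boldsymbol u'}_2=R$ and $\langle\boldsymbol u,\boldsymbol u'\rangle=c^\star$, possible since $|c^\star|\le R^2$. I would treat the complementary regime $\sigma_2^2>2\sigma_1^2R^2$ separately: there the one-variable maximum is at the endpoint $c=-R^2$, yielding $\Delta=2R$ (the quadratic block is effectively drowned by its own noise).

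The routine parts are the whitening reduction and the final one-variable calculus; the part needing care is the middle step. One cannot simply add the block sensitivities, because the sign $y$ and the point $\bx$ are shared between the two blocks, so the worst case is an interior configuration --- antipodal points maximise the linear block but annihilate the quadratic one. The leverage comes entirely from the two $t_2$ identities (evenness and $\langle t_2(\boldsymbol u),t_2(\boldsymbol u')\rangle=\langle\boldsymbol u,\boldsymbol u'\rangle^2$), which collapse the search over $\bx,\bx'\in\mathbb{R}^d$ to three scalars; the cross term $\sigma_2^2/(2\sigma_1^2)$ in $\Delta^2$ is exactly what is picked up at the interior optimum $c^\star$, i.e.\ precisely the quantity a block-by-block bound would overlook.
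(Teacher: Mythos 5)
Your proof is correct and follows essentially the same route as the paper's: rescaling the two blocks to a common noise level (your whitening is the paper's diagonal scaling step), using the exact identities $\norm{yt_1(\bx)-y't_1(\bx')}_2^2=\norm{\bx}_2^2+\norm{\bx'}_2^2-2yy'\langle\bx,\bx'\rangle$ and $\norm{t_2(\bx)-t_2(\bx')}_2^2=\norm{\bx}_2^4+\norm{\bx'}_2^4-2\langle\bx,\bx'\rangle^2$, and maximizing the resulting concave quadratic in the inner product at $c^\star=-\sigma_2^2/(2\sigma_1^2)$. Your extra checks --- that this interior maximizer is feasible only when $\sigma_2^2\leq 2\sigma_1^2R^2$ (otherwise the tight value is $2R$) and that the worst case is actually attained --- are refinements the paper's proof passes over silently, and they do not affect the regime used in Corollary~\ref{cor:logistic_regression} ($R=1$, $\sigma_1=\sigma_2$).
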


\begin{proof}
For the first order terms we have
 \begin{equation} \label{eq:fo}
 \begin{aligned}
     || yt_1(\bx) - y't_1(\bx') ||^{2}_2 &= 
 ||y\bx||^2 + ||y'\bx'||^2 - 2 \langle y\bx, y'\bx' \rangle \\
 &=||\bx||^2 + ||\bx'||^2 - 2 yy'\langle \bx, \bx' \rangle \\
  & \leq 2 R^2 - yy'2 \langle \bx, \bx' \rangle,
 \end{aligned}
 \end{equation} 
 and for the second order terms (see the Supplements)
  \begin{equation} \label{eq:so}
   \begin{aligned}
 || t_2(\bx) - t_2(\bx') ||^{2}_2 &= ||\bx||^4 + ||\bx'||^4 - 2 \langle \bx, \bx' \rangle^2 \\
   & \leq 2R^4 - 2 \langle \bx, \bx' \rangle^2.
  \end{aligned}
 \end{equation}
 We have that
 \begin{equation*}
     \begin{aligned}
     \mathcal{M}(\bx) & \sim \begin{bmatrix} yt_1(\bx) \\ t_2(\bx) \end{bmatrix} + \mathcal{N}\left(0, \begin{bmatrix} \sigma_1^2 I_d & 0 \\ 0 & \sigma_2^2 I_{d_2} \end{bmatrix} \right) \\
     &  \sim \begin{bmatrix} I_d & 0 \\
 0 &  \frac{\sigma_2}{\sigma_1} I_{d_2} 
 \end{bmatrix} \left( \begin{bmatrix} y t_1(\bx) \\ \frac{\sigma_1}{\sigma_2} t_2(\bx) \end{bmatrix} +   \mathcal{N}(0, \sigma_1^2 )  \right). \\
     \end{aligned}
 \end{equation*}
 Removing the constant scaling, we see that
 it is equivalent to consider the mechanism
 $
 \mathcal{M}(\bx) = \begin{bsmallmatrix} y t_1(\bx) \\ \frac{\sigma_1}{\sigma_2} t_2(\bx) \end{bsmallmatrix} +   \mathcal{N}(0, \sigma_1^2 ).
 $
 From \eqref{eq:fo} and \eqref{eq:so} we see that for the sensitivity of the function 
 $ \begin{bsmallmatrix} y t_1(\bx) \\ \frac{\sigma_1}{\sigma_2} t_2(\bx) \end{bsmallmatrix}$ we have
 $$
 \Delta \leq \sqrt{ -2 c t^2 - 2yy't +  2 c R^4 + 2R^2},
 $$
 where we denote $c=\tfrac{\sigma_1^2}{\sigma_2^2}$, $t = \langle \bx, \bx' \big\rangle$.
The bound has its maximum at $t = - \tfrac{yy'}{2c}$, which leads to the claim.
\end{proof}

\begin{corollary}
\label{cor:logistic_regression}
In the special case $R=1$ and $\sigma_1 = \sigma_2 = \sigma$, by Lemma~\ref{lem:t_12}, the optimal $(\epsilon,\delta)$ is obtained by considering the Gaussian mechanism
with noise variance $\sigma^2$ and sensitivity $\Delta = \sqrt{ 4 \tfrac{1}{2} }$.
\end{corollary}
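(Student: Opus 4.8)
The plan is to specialize Lemma~\ref{lem:t_12} to the stated values $R=1$ and $\sigma_1=\sigma_2=\sigma$ and simply read off the two numbers that parametrize the analytic Gaussian mechanism: the noise variance and the $L_2$-sensitivity. First I would observe that when $\sigma_1=\sigma_2$ the constant $c=\sigma_1^2/\sigma_2^2$ appearing in the proof of Lemma~\ref{lem:t_12} equals $1$, so the rescaling matrix $\mathrm{diag}(I_d,(\sigma_2/\sigma_1)I_{d_2})$ is the identity, the block-diagonal noise covariance collapses to $\sigma^2 I$, and no reweighting of $t_2$ is needed; the mechanism is thus already in the isotropic ``equivalent'' form used in that proof, with noise variance exactly $\sigma^2$ and with the sensitivity-relevant function being the stacked statistic $(y\,t_1(\bx),\,t_2(\bx))$.

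Next I would substitute $R=1$ and $\sigma_1=\sigma_2$ into the sensitivity formula $\Delta=\sqrt{\sigma_2^2/(2\sigma_1^2)+2R^2+2(\sigma_1^2/\sigma_2^2)R^4}$ from Lemma~\ref{lem:t_12}, which gives $\Delta=\sqrt{\tfrac12+2+2}=\sqrt{4\tfrac12}$. Plugging this variance $\sigma^2$ and this $\Delta$ into the analytic Gaussian mechanism condition~\eqref{eq:agm} then characterizes the tightest admissible $(\epsilon,\delta)$, which is exactly the claim.

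There is essentially no obstacle; the corollary is a one-line arithmetic specialization of Lemma~\ref{lem:t_12}. The only point worth a sentence of care is \emph{tightness}: the bound in the proof of Lemma~\ref{lem:t_12} is attained at $t=\langle\bx,\bx'\rangle=-yy'/(2c)=\mp\tfrac12$, and since any inner-product value in $[-1,1]$ is realized by two unit vectors at the appropriate angle (choosing the sign of $yy'$ freely), this worst case is genuinely achievable under $\|\bx\|_2\le 1$, $\|\bx'\|_2\le 1$. Hence the one-dimensional Gaussian mechanism with variance $\sigma^2$ and sensitivity $\sqrt{4\tfrac12}$ describes the privacy cost exactly rather than merely upper-bounding it, which justifies invoking~\eqref{eq:agm} to obtain the optimal $(\epsilon,\delta)$.
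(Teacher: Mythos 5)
Your proposal is correct and follows exactly the paper's route: the corollary is obtained by substituting $R=1$ and $\sigma_1=\sigma_2=\sigma$ into the sensitivity expression of Lemma~\ref{lem:t_12}, giving $\Delta=\sqrt{\tfrac12+2+2}=\sqrt{4\tfrac12}$ with noise variance $\sigma^2$. Your added remark on attainability of the maximizer $t=\langle\bx,\bx'\rangle=-yy'/(2c)=\mp\tfrac12$ by unit vectors is a point the paper leaves implicit in its tightness claim, and it is a correct and welcome clarification rather than a different argument.
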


\textbf{The general case.} When using a higher order polynomial (i.e. $m>2$), each $t_m(\bx)$ has to include all monomials of the form
$$
x_{i_1}^{k_1} \cdots x_{i_{m'}}^{k_{m'}}
$$
for all \emph{combinations} of positive integers $(k_1,\ldots,k_{m'})$ such that $k_1 + \ldots k_{m'} = m$.
Multiplying each monomial with the multinomial coefficient $\sqrt{\binom{m}{k_1,\ldots,k_{m'}}}$ and assuming $t_m(\bx)$ contains the monomials of order $m$, we find that (see the Supplements)
$$
|| t_m(\bx) - t_m(\bx') ||^{2}_2 = ||\bx||^{2m} + ||\bx'||^{2m} - 2 \langle \bx, \bx' \rangle^m.
$$
Finding the sensitivity upper bounds can then be carried out as in the case $m=2$.
For example, adding $\sigma$-noise to all terms, we need to bound the sensitivity of the function
$
\begin{bsmallmatrix} t_1(\bx)^T & \hdots &  t_m(\bx)^T \end{bsmallmatrix}^T
$
for which we have 
$$
\Delta^2 \leq \sum\nolimits_{i=1}^m  2 R^{2i} - 2 \langle \bx, \bx' \rangle^i.
$$
Maximum of the right hand side is found by minimising the polynomial $\sum_{i=1}^m t^i$. For example, for $m=6$, we find that the upper bound is attained for $\langle \bx, \bx' \rangle \approx -0.67$ and then 
$\Delta \approx \sqrt{12.72}$.

\subsubsection{Poisson regression}

In case of Poisson regression, in addition to $t_1(\bx)$ and $t_2(\bx)$,
sufficient statistics requires releasing $y \in \mathbb{N}^{\geq 0}$.
Similarly to Lemma~\ref{lem:t_12}, we can show the following:

\begin{lemma} \label{lem:t_12y}
Let $t_1$ and $t_2$ be defined as in \eqref{eq:t_12} and let $\sigma_1,\sigma_2,\sigma_3,\sigma_4>0$.
Suppose $||\bx||_2 \leq R_x$ and $y \leq R_y$.
Consider the mechanism 
$$
\mathcal{M}(\bx) = \begin{bsmallmatrix} t_1(\bx) \\ t_2(\bx) \\ y t_1(\bx) \\ y t_2(\bx)\end{bsmallmatrix} + \mathcal{N}\left(0, \begin{bsmallmatrix} \sigma_1^2 I_d & 0 & 0 & 0 \\ 0 & \sigma_2^2 I_{d_2} & 0 & 0 \\
0 & 0 & \sigma_3^2 I_{d_1} & 0 \\
0 & 0 & 0 & \sigma_4^2 I_{d_2} \end{bsmallmatrix} \right).
$$
Then, the tight $(\epsilon,\delta)$-DP for $\mathcal{M}$ 
is obtained by considering a Gaussian mechanism with noise variance $\sigma_1^2$
and sensitivity
$$
\Delta = \frac{ 2 (c_2 + c_4) R_x^2 + c_3 + 1 }{ \sqrt{2(c_2+c_4)} },
$$
where $c_2 = \tfrac{\sigma_1^2}{\sigma_2^2}$,
$c_3 = \tfrac{\sigma_1^2}{\sigma_3^2} R_y^2$ and
$c_4 = \tfrac{\sigma_1^2}{\sigma_4^2} R_y^2$.
\end{lemma}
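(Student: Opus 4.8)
The strategy mirrors the proof of Lemma~\ref{lem:t_12}: reduce $\mathcal{M}$ to a single isotropic Gaussian mechanism by a fixed linear post-processing, and then upper-bound the resulting $L_2$ sensitivity by maximizing an explicit low-degree polynomial over a neighbouring pair $((\bx,y),(\bx',y'))$, finally recognizing the closed form.

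First I would factor the block-diagonal noise covariance as $\sigma_1^2 D^2$ with $D=\operatorname{diag}\!\left(I_d,\tfrac{\sigma_2}{\sigma_1}I_{d_2},\tfrac{\sigma_3}{\sigma_1}I_{d_1},\tfrac{\sigma_4}{\sigma_1}I_{d_2}\right)$, so that $\mathcal{M}(\bx)\sim D\!\left(g(\bx,y)+\mathcal{N}(0,\sigma_1^2 I)\right)$ with $g(\bx,y)=\big[t_1(\bx)^T,\ \tfrac{\sigma_1}{\sigma_2}t_2(\bx)^T,\ \tfrac{\sigma_1}{\sigma_3}\,y\,t_1(\bx)^T,\ \tfrac{\sigma_1}{\sigma_4}\,y\,t_2(\bx)^T\big]^T$. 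Since $D$ is a fixed invertible linear map, post-processing invariance of DP implies that the privacy profile of $\mathcal{M}$ equals that of the Gaussian mechanism $g(\bx,y)+\mathcal{N}(0,\sigma_1^2 I)$, whose tight $(\epsilon,\delta)$-DP (via the analytic Gaussian mechanism of Section~\ref{sec:DP}) is governed by $\Delta=\sup\norm{g(\bx,y)-g(\bx',y')}_2$.

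Next I would expand $\norm{g(\bx,y)-g(\bx',y')}_2^2$ block by block. From \eqref{eq:fo}--\eqref{eq:so} (setting one argument to $\bzero$ and polarizing) one reads off $\norm{t_1(\bx)}^2=\norm{\bx}^2$, $\langle t_1(\bx),t_1(\bx')\rangle=\langle\bx,\bx'\rangle$, $\norm{t_2(\bx)}^2=\norm{\bx}^4$ and $\langle t_2(\bx),t_2(\bx')\rangle=\langle\bx,\bx'\rangle^2$. Writing $t=\langle\bx,\bx'\rangle$, $c_2=\tfrac{\sigma_1^2}{\sigma_2^2}$, $c_3'=\tfrac{\sigma_1^2}{\sigma_3^2}$ and $c_4'=\tfrac{\sigma_1^2}{\sigma_4^2}$, the four blocks contribute $\norm{\bx}^2+\norm{\bx'}^2-2t$, then $c_2(\norm{\bx}^4+\norm{\bx'}^4-2t^2)$, then $c_3'(y^2\norm{\bx}^2+y'^2\norm{\bx'}^2-2yy't)$, and $c_4'(y^2\norm{\bx}^4+y'^2\norm{\bx'}^4-2yy't^2)$. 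Since $y,y'\ge 0$ for Poisson counts, every coefficient multiplying $\norm{\bx}^2,\norm{\bx'}^2,\norm{\bx}^4,\norm{\bx'}^4$ is nonnegative, so substituting $\norm{\bx},\norm{\bx'}\le R_x$ gives an upper bound $F(t,y,y')$ that is concave quadratic in $t$ (with $t\in[-R_x^2,R_x^2]$) and in which $y,y'$ range over $[0,R_y]$. I would then argue the supremum of $F$ is attained at $y=y'=R_y$; substituting this and $c_3=c_3'R_y^2$, $c_4=c_4'R_y^2$ collapses $F$ to $-2(c_2+c_4)t^2-2(1+c_3)t+2R_x^2(1+c_3)+2R_x^4(c_2+c_4)$, a quadratic in $t$ of exactly the shape handled in Lemma~\ref{lem:t_12}. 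Its maximum is attained at $t^\star=-\tfrac{1+c_3}{2(c_2+c_4)}$, giving
\[
\Delta^2=\frac{(1+c_3)^2}{2(c_2+c_4)}+2R_x^2(1+c_3)+2R_x^4(c_2+c_4)=\frac{\big(2(c_2+c_4)R_x^2+c_3+1\big)^2}{2(c_2+c_4)},
\]
i.e.\ the claimed $\Delta$. As in Lemma~\ref{lem:t_12}, all the inequalities are attained simultaneously by a concrete neighbouring pair with $\norm{\bx}=\norm{\bx'}=R_x$, $y=y'=R_y$ and $\langle\bx,\bx'\rangle=t^\star$, so applying the analytic Gaussian mechanism with this $\Delta$ and variance $\sigma_1^2$ is tight for $\mathcal{M}$.

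The substantively new step, and the one I expect to be the main obstacle, is the reduction to $y=y'=R_y$: after maximizing $F$ over $t$ for fixed $(y,y')$, the residual dependence on $(y,y')$ enters both through the increasing term $(c_3'R_x^2+c_4'R_x^4)(y^2+y'^2)$ and through a piece depending only on the product $yy'$ (coming from the bilinear coupling $-2yy't$), whose effect is entangled with the sign of $t^\star$. I would settle this by a short convexity/corner analysis on the square $[0,R_y]^2$, exploiting $t^\star<0$; a minor caveat, suppressed here as in Lemma~\ref{lem:t_12}, is that one should also check $t^\star\in[-R_x^2,R_x^2]$, which holds in the regime of interest.
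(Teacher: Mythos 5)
Your overall route (factor out the block-diagonal scaling, reduce to a single isotropic Gaussian mechanism with variance $\sigma_1^2$, then maximize a quadratic in $t=\langle\bx,\bx'\rangle$) is exactly the paper's. The genuine gap is the step you yourself flag as the main obstacle: the claim that the residual maximization over $(y,y')\in[0,R_y]^2$ collapses to the corner $y=y'=R_y$. For general $\sigma_1,\dots,\sigma_4$ this is not merely unproven but false, so no corner/convexity argument will close it. Concretely, take $R_x=1$, $\bx$ a unit vector, $\bx'=-\bx$, $y=R_y$, $y'=0$. Since $t_2(-\bx)=t_2(\bx)$ and $\norm{t_2(\bx)}_2^2=\norm{\bx}_2^4$, the rescaled statistic $G(\bx,y)=\bigl[\,t_1(\bx)^T,\ \tfrac{\sigma_1}{\sigma_2}t_2(\bx)^T,\ \tfrac{\sigma_1}{\sigma_3}y\,t_1(\bx)^T,\ \tfrac{\sigma_1}{\sigma_4}y\,t_2(\bx)^T\,\bigr]^T$ satisfies
\[
\norm{G(\bx,y)-G(\bx',y')}_2^2 = 4 + c_3 + c_4,
\]
whereas the claimed sensitivity squares to
\[
\Delta^2 = 2(c_2+c_4) + 2(1+c_3) + \frac{(1+c_3)^2}{2(c_2+c_4)}.
\]
Choosing, e.g., $\sigma_2=\sqrt{2}\,\sigma_1$ (so $c_2=\tfrac12$), $c_4=\tfrac{1}{10}$, and $\sigma_3$ so large that $c_3\approx 0$, the achievable value is $\approx 4.1$ while $\Delta^2\approx 4.03$. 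Hence in such regimes the supremum over $(y,y')$ is not at $(R_y,R_y)$, and the quantity you would need to bound actually exceeds the stated $\Delta$; your plan as written cannot be completed.

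It is worth seeing how the paper's own proof (the supplement version, Lemma~\ref{lemma:Poisson_reg_sensitivity}) relates to this. There the third and fourth blocks are written as $\norm{y\,t_1(\bx)-y\,t_1(\bx')}$ and $\norm{y\,t_2(\bx)-y\,t_2(\bx')}$ with a \emph{common} $y$, which is then bounded via $y^2\le R_y^2$; the $y\neq y'$ case of the replace-one-record neighbourhood is never confronted. Under that restriction your algebra with $y=y'$ reproduces the stated $\Delta$ verbatim, so your proposal is the paper's argument plus the honest treatment of differing responses --- and the counterexample above shows the honest version does not yield the closed form of the lemma for all noise ratios (unlike Lemma~\ref{lem:t_12}, where the $yy'$ coupling is handled correctly because $y\in\{-1,1\}$). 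In the symmetric situation actually used downstream (the corollary with $R_x=1$, $\sigma_1=\cdots=\sigma_4$), your corner reduction does go through: after optimizing over $t$ the objective is $4+2(y^2+y'^2)+\tfrac12(1+yy')$, increasing in $y$ and $y'$, giving $\Delta^2=\tfrac92(1+R_y^2)$ as claimed. A correct general statement would have to take the maximum over the corners of $[0,R_y]^2$ of the $t$-optimized (and, as you note, $t$-range-clipped) quadratic, which is no longer the single closed form asserted in the lemma.
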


\begin{corollary}
In the special case $R_x=1$ and $\sigma_1 = \ldots = \sigma_4 = \sigma$, the  optimal $(\epsilon,\delta)$ is obtained by considering the Gaussian mechanism with noise variance $\sigma^2$
and sensitivity $\Delta = \sqrt{ 4 \tfrac{1}{2}( 1 + R^{2}_y ) }$.
\end{corollary}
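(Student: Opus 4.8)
The plan is to obtain this corollary as an immediate specialization of Lemma~\ref{lem:t_12y}; no new argument is needed, only a substitution followed by a short algebraic simplification. First I would instantiate the noise scales of the lemma. Setting $\sigma_1=\sigma_2=\sigma_3=\sigma_4=\sigma$ gives $c_2=\tfrac{\sigma_1^2}{\sigma_2^2}=1$, while $c_3=\tfrac{\sigma_1^2}{\sigma_3^2}R_y^2=R_y^2$ and $c_4=\tfrac{\sigma_1^2}{\sigma_4^2}R_y^2=R_y^2$, so all three auxiliary constants collapse to expressions in $R_y$ alone. The noise variance of the equivalent single-block Gaussian mechanism is then $\sigma_1^2=\sigma^2$, as claimed.

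Next I would plug $c_2,c_3,c_4$ together with $R_x=1$ into the sensitivity expression $\Delta=\bigl(2(c_2+c_4)R_x^2+c_3+1\bigr)/\sqrt{2(c_2+c_4)}$ supplied by Lemma~\ref{lem:t_12y}. The numerator becomes $2(1+R_y^2)+R_y^2+1=3(1+R_y^2)$ and the denominator becomes $\sqrt{2(1+R_y^2)}$, hence
$$
\Delta=\frac{3(1+R_y^2)}{\sqrt{2(1+R_y^2)}}=3\sqrt{\frac{1+R_y^2}{2}}=\sqrt{\frac{9(1+R_y^2)}{2}}=\sqrt{4\tfrac12\,(1+R_y^2)},
$$
which is exactly the stated value; the cancellation $(1+R_y^2)/\sqrt{1+R_y^2}=\sqrt{1+R_y^2}$ is valid since $1+R_y^2>0$ always. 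The qualifier that this $(\sigma,\Delta)$ pair yields the \emph{optimal} (tight) $(\epsilon,\delta)$ guarantee is inherited verbatim from Lemma~\ref{lem:t_12y}, which already asserts that reducing $\mathcal{M}$ to the Gaussian mechanism with variance $\sigma_1^2$ and sensitivity $\Delta$ characterizes its tight privacy; specializing the parameters does not weaken that conclusion.

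There is essentially no genuine obstacle here beyond bookkeeping: the only points worth double-checking are that each ratio $c_i$ carries $\sigma_1^2$ in its numerator, and that it is $R_x=1$ (not $R_y=1$) that is being assumed. If one instead wanted a fully self-contained derivation, one could rerun the proof of Lemma~\ref{lem:t_12}: bound the squared sensitivity of the $t_i(\bx)$ blocks by $2R_x^{2i}-2\langle\bx,\bx'\rangle^i$ and that of the $y$-weighted blocks by $R_y^2\bigl(2R_x^{2i}-2\langle\bx,\bx'\rangle^i\bigr)$, sum them, use the diagonal rescaling trick to reduce to a single-variance Gaussian mechanism, and maximize the resulting low-degree polynomial in $\langle\bx,\bx'\rangle$ over $[-R_x^2,R_x^2]$ — but this merely reproves Lemma~\ref{lem:t_12y} and is unnecessary given that the lemma is available.
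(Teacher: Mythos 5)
Your proposal is correct and matches the paper's intent exactly: the corollary is stated as an immediate specialization of Lemma~\ref{lem:t_12y}, and your substitution $c_2=1$, $c_3=c_4=R_y^2$, $R_x=1$ giving $\Delta=3(1+R_y^2)/\sqrt{2(1+R_y^2)}=\sqrt{\tfrac{9}{2}(1+R_y^2)}$ (i.e.\ $\sqrt{4\tfrac12\,(1+R_y^2)}$, reading $4\tfrac12$ as the mixed fraction, consistent with Corollary~\ref{cor:logistic_regression}) is precisely the intended one-line derivation, with the noise variance $\sigma_1^2=\sigma^2$ and the tightness claim inherited from the lemma.
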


\section{Experiments}
In this Section we show experiments on logistic regression and leave experiments on Poisson regression to the Supplement.

\subsection{Default settings and implementation}
Throughout our experiments, we use the first two central moments of joint $(\bX, y)$ as the summary statistics. 
Both noise-aware and non-private baseline models for logistic regression are specified in Stan~\cite{Stan} using its Python interface. We use Stan's default \emph{No-U-Turn}~\cite{nuts} sampler, which is a variant of Hamiltonian Monte Carlo. 
We run 4 Markov chains in parallel and discard the first 50\% burn-in samples. We fix $R=1$ to use Corollary~\ref{cor:logistic_regression}. The Gelman-Rubin convergence statistic~\cite{R_hat:1998} was consistently below 1.1 for all Stan experiments. For brevity we provide comparisons only for the $\btheta$'s in our figures.

\begin{figure*}[t]
    \centering        
    \includegraphics[width=0.8\textwidth]{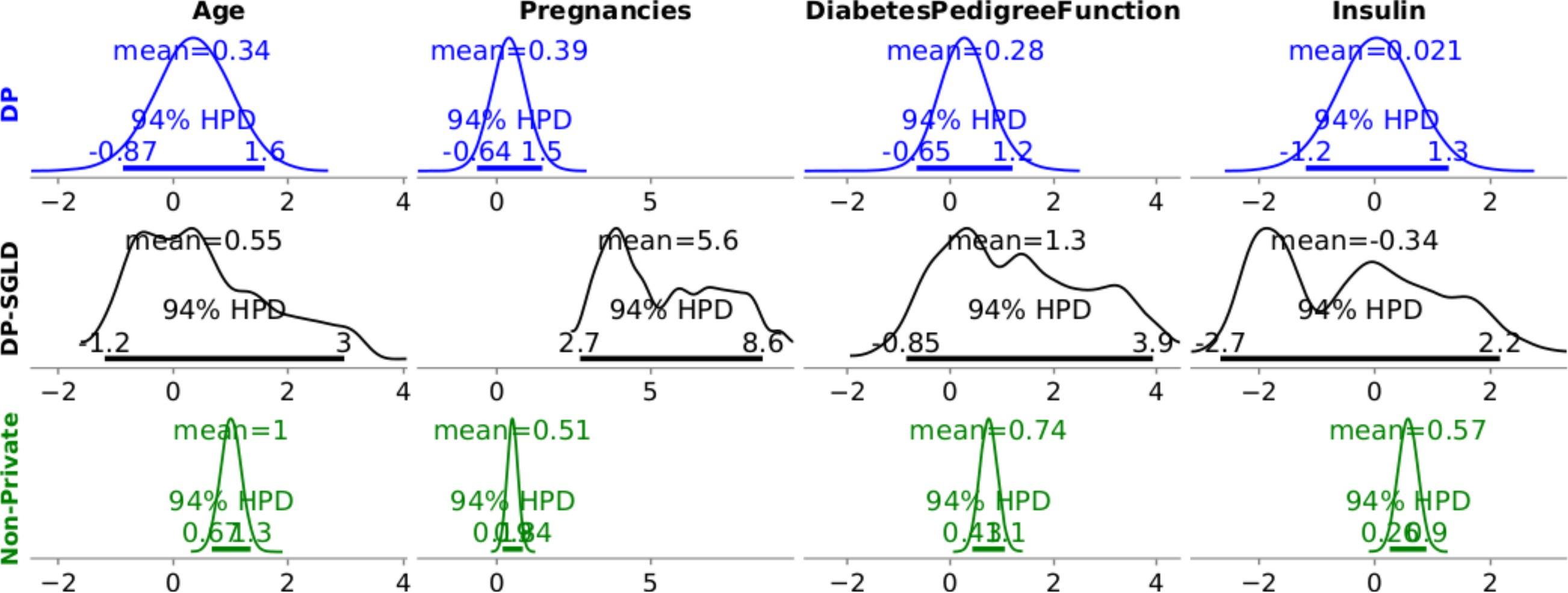}
    \includegraphics[width=0.8\textwidth]{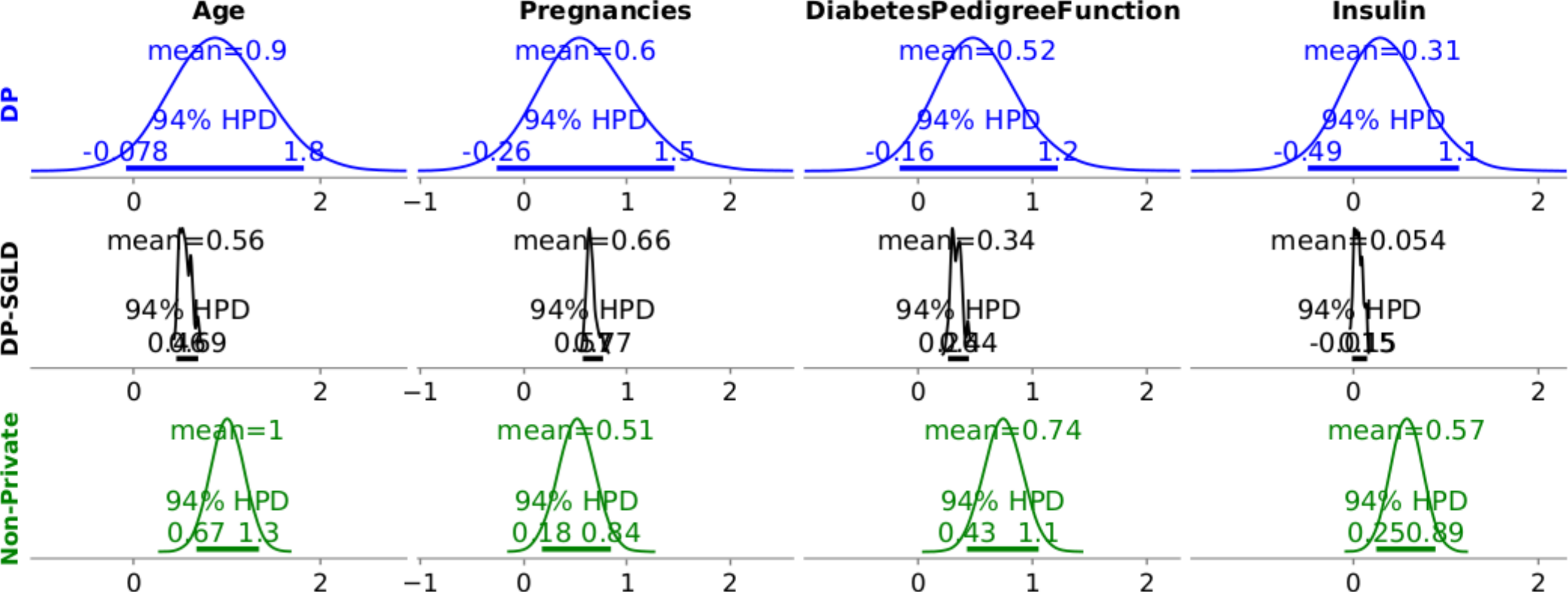}
    \caption{Comparison of differentially private posteriors from our model (blue) and DP-SGLD (black) with non-private posteriors for $\btheta$ for the diabetes~\cite{Kahn:94} dataset ($N=758$) for $\epsilon=0.1,\delta=10^{-5}$ (top) and $\epsilon=0.3,\delta=10^{-5}$ (bottom) after $10,000$ iterations. The batch size and the learning rate chosen for DP-SGLD were $28$ and $10^{-1}$. The posteriors from DP-SGLD are more biased and either exhibit a much higher variance or fail to quantify the expected uncertainty.}
    \label{fig:diabetes}
\end{figure*}

\begin{figure*}[t]
    \centering        
    \includegraphics[width=0.95\textwidth]{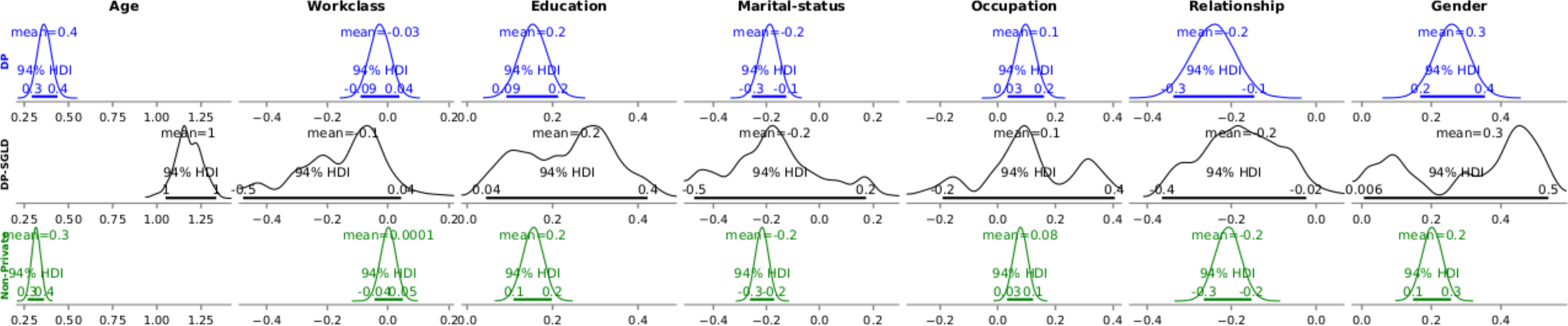}
    \caption{Comparison of differentially private posteriors from our model (DP) and DP-SGLD with non-private posteriors for $\btheta$ for randomly sampled $8000$ records in the adult dataset~\cite{Blake:98} for $\epsilon=0.1,\delta=10^{-5}$ after $10,000$ iterations. The batch size and the learning rate chosen for DP-SGLD were $89$ and $10^{-2}$. DP-SGLD  posteriors are biased and overestimate the uncertainty even at such large sample size.}
        \label{fig:adult}
\end{figure*}

\begin{figure*}[ht]
    \centering        \includegraphics[scale=0.45]{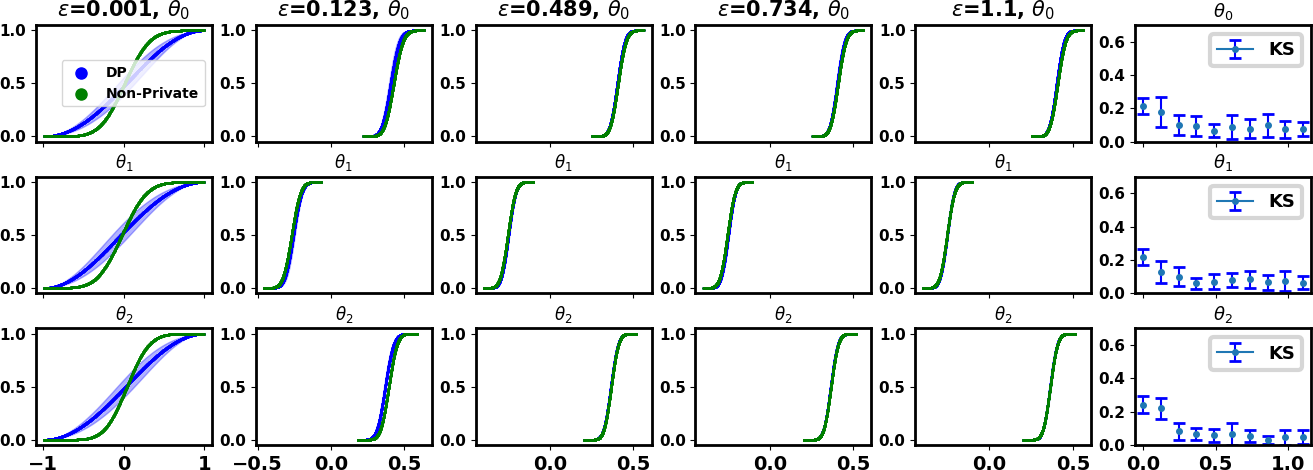}
    \caption{Comparison of differentially private and non-private empirical CDFs for $\btheta$'s posteriors for a synthetic dataset with $N=1000$ after $30,000$ iterations and $20$ repetitions while fixing $\delta=10^{-5}$ for various $\epsilon$ values. The right- most column shows the Kolmogorov-Smirnov scores between non-private and private empirical CDFs for various $\epsilon$ values.}
    \label{fig:cdf}
\end{figure*}

\para{Setting priors for model parameters $\btheta$ and $\bSigma$.}
For the data covariance matrix $\bSigma$ we gave a scaled LKJ~\cite{lkj:09} prior. We scale a positive definite correlation matrix from the LKJ correlation distribution of shape $\eta = 2$ from both sides with a diagonal matrix with $\mathcal{N}(0, 2.5)$ distributed diagonal entries. The probabilistic model is:

\begin{equation*}
\begin{aligned}
    \Omega &\sim \text{LKJ}(2), \quad
    \tau \sim \mathcal{N}(\mathbf{0}, 2.5 \cdot \mathbf{I}), \\
    \bSigma &= \text{diag}(\boldsymbol\tau) \, \Omega \, \text{diag}(\boldsymbol\tau).
\end{aligned}
\end{equation*}

In order to prevent the inference from sampling $\btheta$'s with large magnitudes, we gave the regression coefficients' orientation a uniform prior, and the squared norm a truncated Chi-square prior. We treat the upper-bound for the truncation as a hyper-parameter, which was set to 2 or 3 times the square of non-private $\btheta's$ norm. The exact probabilistic model is:

\begin{equation*}
\begin{aligned}
\mathbf{p} &\sim \mathcal{N}(\mathbf{0}, \mathbf{I}), \quad
    \rho \sim \chi^2(d) \\
    \btheta &= \sqrt{\max(\rho, s)} \frac{\mathbf{p}}{|| \mathbf{p} ||_2 }.
\end{aligned}
\end{equation*}

In reality, the upper bound $s$ could be obtained in a DP way or its approximate value could be known from domain expertise. The question of designing a better prior that does not require such truncation bound is left as a future exercise. 

\para{Private Baseline.} We compare our method with a Python implementation of general-purpose DP posterior inference  DP-SGLD~\cite{WYFS:2015,DP_SGLD:19}, using the Fourier accountant~\cite{Fourier_accountant} for tight DP accounting. DP-SGLD perturbs the scaled gradients in each step and therefore the noise scales proportional to the number of iterations, whereas we only pay a small upfront privacy cost for perturbing the sum of sufficient statistics and enjoy posterior samples for free. 
We run DP-SGLD with batch-size $\sqrt{N}$ (as suggested in \cite{ACGMMTZ:16}) for 10,000 iterations and discard the first 6000 samples as burn-in.

\subsection{Posterior recovery} 
\par We trained our logistic regression model on $8000$ pre-processed random records from the adult dataset~\cite{Blake:98} with features age, workclass, education, martial-status, occupation, relationship, and gender to predict whether a person's income exceeds $50K$ dollars. Furthermore, we also train on abridged version of a much smaller pre-processed diabetes~\cite{Kahn:94} dataset with dimensions age, number of pregnancies, diabetes pedigree function, and insulin levels to predict whether a person has diabetes or not. Figures~\ref{fig:diabetes} and \ref{fig:adult} illustrate the outcome of these experiments for DP parameters in a strong-privacy regime. We verify that for both datasets, private posteriors from our model are close to the non-private posteriors. The posteriors obtained from DP-SGLD are highly variable, ranging from highly biased to hugely underconfident to massively overconfident.

\subsection{Varying privacy requirements}
We now test the accuracy of our methods against a verity of privacy settings. 
Figure~\ref{fig:cdf} compares the posterior empirical cumulative distribution functions (CDFs) of private and non-private $\btheta$'s for a synthetic dataset with a randomly sampled positive definite non-identity co-variance matrix with true $\btheta$ as $[-0.9,-0.5,0.3]$. We see that the private and non-private CDFs are almost overlapping for $\epsilon > 0.1$. In the right-most column, we additionally plot the Kolmogorov-Smirnov scores (maximum absolute difference between two CDFs) for 10 equally spaced $\epsilon$ values in the range [0.001,1.1]. Once again, we note a general non-increasing trend.  

\par These results demonstrate that our model is accurate for datasets with small true $\btheta$'s for moderate to large sample sizes even when privacy requirements are strict. 

\section{Limitations}
In additional internal experiments on synthetic datasets, we studied accuracy as a function of $L_2$-norm of true $\btheta$, fixing other parameters, and found that the accuracy decreases sharply when $|| \btheta||_2 \geq 3$, when using the proposed approximations. We suspect this is due the truncation in the Taylor series, used to reduce the computational complexity. The bound on theta norm essentially means that the predicted probabilities can differ by at most a modest number of logistic units. While this may sound limiting, it may not be a serious issue for real datasets, where the signal is not very strong. With a proper prior, the model would most likely simply underfit in such cases. The implications on inference of the signs of the regression coefficients would also likely be limited.

\section{Concluding Remarks}
This work formulates a noise-aware model for GLMs for performing DP Bayesian inference and demonstrates its efficacy for datasets with regression coefficients of small magnitudes. Our method combines a normal approximation based on the central limit theorem with moment matching for perturbed low order data moments. We carry out a sensitivity analysis for the DP mechanisms which gives tight bounds and leads to high utility. This is also reflected in the experimental results on the logistic regression. Our sensitivity analysis also shows that we can increase utility by simultaneously releasing the linear and quadratic terms. Since computation of approximate sufficient statistics is a transformation of data, it seems possible to develop similar noise-aware models in distributed learning scenarios such as federated learning~\cite{FL} and local differential privacy~\cite{LDP:18}. 

\section{Acknowledgement}
This work was sponsored by the Academy of Finland; Grants 325573, 325572, 313124, 335516, and Flagship programme: Finnish Center for Artificial Intelligence, FCAI. We are grateful to the Aalto Science-IT project for their computational resources. We also wish to thank Daniel Sheldon for useful discussions and for the suggestion to use the PASS-GLM framework. 

\bibliography{papers}

\onecolumn

\section{Examples of moment calculation using Isserlis' Theorem}
\begin{theorem} {Isserlis' theorem~\cite{isserlis}.}\label{thm:isserlis} Let $\bx \sim \mathcal{N}^{d}(\bzero,\bSigma)$ be a $d-$dimensional random variable.
\begin{align}
    \mathbb{E}[x_1,\cdots,x_d]= \sum_{p \in P^{2}_{d}} \prod_{\{i,j\} \in p} \mathbb{E}[x_ix_j]=\sum_{p \in P^{2}_{d}} \prod_{\{i,j\} \in p} \Sigma_{ij}
\end{align}
 Where $d$ is assumed to be an even number and $P$ is the set of all possible ways of partitioning $\{1,\cdots,d\}$ in to pairs $\{i,j\}$. For odd d's, $\mathbb{E}[x_1,\cdots,x_d]=0$. 
\end{theorem}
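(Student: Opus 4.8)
The plan is to prove the moment identity through the moment-generating function (MGF) of the Gaussian, reducing the whole statement to a single combinatorial coefficient extraction. Recall that for $\bx \sim \mathcal{N}^{d}(\bzero,\bSigma)$ the MGF is $M(\bt) = \mathbb{E}[\exp(\bt^T\bx)] = \exp(\tfrac12 \bt^T\bSigma\bt)$, and that differentiating under the expectation gives $\mathbb{E}[x_1\cdots x_d] = \partial_{t_1}\cdots\partial_{t_d} M(\bt)\big|_{\bt=\bzero}$. Thus the theorem becomes a statement about one specific mixed partial derivative of $\exp(\tfrac12\bt^T\bSigma\bt)$ at the origin.

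First I would expand $M(\bt) = \sum_{n\geq 0} \tfrac{1}{n!}\bigl(\tfrac12\bt^T\bSigma\bt\bigr)^n$ and note that the $n$-th summand is a homogeneous polynomial of degree $2n$ in $\bt$. Applying $\partial_{t_1}\cdots\partial_{t_d}$ and evaluating at $\bzero$ kills every monomial except $t_1 t_2\cdots t_d$ (any coordinate appearing with power $0$ or $\geq 2$ contributes $0$ after this evaluation), so the moment equals the coefficient of the multilinear monomial $t_1\cdots t_d$ in $M$. Since every summand has even total degree $2n$, this coefficient vanishes whenever $d$ is odd, which immediately yields $\mathbb{E}[x_1\cdots x_d]=0$ in the odd case. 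For even $d = 2k$ only the term $n=k$ can contribute, so it suffices to read off the coefficient of $t_1\cdots t_{2k}$ in $\tfrac{1}{k!\,2^k}\bigl(\sum_{i,j}\Sigma_{ij}t_it_j\bigr)^k$.

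The combinatorial heart, which I expect to be the main obstacle, is the bookkeeping in this final extraction. Expanding the $k$-th power as a product of $k$ factors, each chosen as $\Sigma_{i_\ell j_\ell}t_{i_\ell}t_{j_\ell}$, the monomial $t_1\cdots t_{2k}$ arises exactly when the ordered tuple $(i_1,j_1,\ldots,i_k,j_k)$ is a permutation of $(1,\ldots,2k)$, and each such tuple contributes $\prod_\ell \Sigma_{i_\ell j_\ell}$. I would then group these ordered tuples by their underlying unordered pairing $p\in P^{2}_{2k}$: using the symmetry $\Sigma_{ij}=\Sigma_{ji}$, each pairing $p$ is produced by exactly $k!\,2^k$ ordered tuples (the $k!$ orderings of the pairs times the $2^k$ orderings within pairs), all contributing the same value $\prod_{\{i,j\}\in p}\Sigma_{ij}$. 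Hence the coefficient of $t_1\cdots t_{2k}$ in the un-normalised power is $k!\,2^k\sum_{p\in P^{2}_{2k}}\prod_{\{i,j\}\in p}\Sigma_{ij}$, and the prefactor $\tfrac{1}{k!\,2^k}$ cancels this overcounting to give precisely $\sum_{p\in P^{2}_{d}}\prod_{\{i,j\}\in p}\Sigma_{ij}$. Specialising the identity to $d=2$ gives $\mathbb{E}[x_ix_j]=\Sigma_{ij}$, supplying the first equality in the statement, and as a consistency check setting all $\Sigma_{ij}=1$ recovers the pairing count $(2k-1)!!=(2k)!/(2^k k!)$. An alternative route, which I would fall back on if the coefficient bookkeeping proved delicate, is a direct induction on $d$ using the Gaussian integration-by-parts identity $\mathbb{E}[x_{i_1}g(\bx)]=\sum_j\Sigma_{i_1 j}\,\mathbb{E}[\partial_{x_j}g(\bx)]$ with $g=\prod_{\ell\geq 2}x_{i_\ell}$; this recursion pairs $x_{i_1}$ with each remaining factor in turn, lowers the order by two, reproduces the sum over pairings directly, and terminates at the odd base case $\mathbb{E}[x_i]=0$.
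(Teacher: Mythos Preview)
Your proof is correct, but there is nothing to compare it against: the paper does not prove Isserlis' theorem. Both in the main text and in the supplement it is stated as a classical result with a citation to \cite{isserlis}, and only \emph{examples} of moment calculations using it are given. So you have supplied a proof where the paper simply quotes the literature.

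That said, your argument is clean and standard. The MGF route---expanding $\exp(\tfrac12\bt^T\bSigma\bt)$, isolating the multilinear monomial, and counting the $k!\,2^k$ ordered tuples per pairing---is exactly the textbook derivation, and your handling of the odd case by degree parity is the right observation. The fallback you mention via the Gaussian integration-by-parts identity $\mathbb{E}[x_{i_1}g(\bx)]=\sum_j\Sigma_{i_1 j}\,\mathbb{E}[\partial_{x_j}g(\bx)]$ (Stein's lemma) is the other canonical proof and arguably more transparent, since it directly produces the recursive pairing structure without any coefficient bookkeeping; either route would be acceptable here. The consistency check $(2k-1)!! = (2k)!/(2^k k!)$ you include is also precisely the count used in the paper's Proposition~\ref{prop:isserlis_count}, so your argument dovetails with how the result is applied downstream.
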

For example, two $4$th order moments are computed using the Isserlis' theorem below.  
\begin{align*}
    \mathbb{E}[x^{2}_{1}x^{2}_{2}] &= \mathbb{E}[x_{1}x_{1}x_{2}x_{2}] \\ &= \Sigma_{11} \Sigma_{22}+\Sigma_{12} \Sigma_{12} +\Sigma_{12} \Sigma_{12} \\ &=\Sigma_{11} \Sigma_{22}+2\Sigma_{12} \Sigma_{12} \\ 
    \mathbb{E}[x^{2}_{1}x_{2} x_{3}] &= \mathbb{E}[x_{1}x_{1}x_{2} x_{3}] \\ &= \Sigma_{11} \Sigma_{23}+\Sigma_{12} \Sigma_{13} +\Sigma_{13} \Sigma_{12} \\ &=\Sigma_{11} \Sigma_{23}+2\Sigma_{12} \Sigma_{13}
\end{align*}

\section{Poisson regression with softplus as the link function}

s
We use the softplus link function $\mu = \mathbb{E}[y | \bx, \btheta] = \log(1+ \exp(\bx^{T}\btheta))$. The probability mass function is given by 
\begin{align}
    \Pr[y | \bx , \btheta] = \frac{\mu^{y} \exp(-\mu)}{y!} = \frac{(\log(1+ \exp(\bx^{T}\btheta)))^{y} \exp(-\log(1+ \exp(\bx^{T}\btheta)))}{y!}  
\end{align}
And the log-likelihood is given by 
\begin{align}\label{eq:ll_poisson}
\log \Big[ \Pr[y | \bx , \btheta]\Big] \propto y \log(\log(1+ \exp(\bx^{T}\btheta))) - \log(1+ \exp(\bx^{T}\btheta))
\end{align}
Now we find the normal approximation to each term in equation~\ref{eq:ll_poisson}.

\subsection{$\log(1+ \exp(\bx^{T}\btheta))$}

The second part of the likelihood i.e. $\log(1+\exp( \bx^{T} \btheta))$ is non-linear and its approximate computation requires a $m$ order polynomial expansion. The summary statistics are $\bt(\bx) = ([\bx]^{\boldsymbol{k}})_{\boldsymbol{k}} \forall \boldsymbol{k} \in \mathbb{N}^{d}: \sum_{j} \boldsymbol{k}_j = m' , m' \leq m$, where $[\bx]^{\boldsymbol{k}}= \prod_{j=1}^{d} \bx_j^{\boldsymbol{k}_{j}}$. An example of a second order (i.e. $m=2$) approximate summary statistic $\bt (\bx)$ for logistic regression when $d=4$ is given below.

\begin{align}
\bt(\boldsymbol{x})= \Big[1,   x_1 , x_2,x_3, x_4,  x^{2}_1 ,  x^{2}_2,x^{2}_3, x^{2}_4, x_1 x_2 ,x_1 x_3 ,x_1  x_4,x_2 x_3,x_2 x_4, x_3  x_4  \Big]    
\end{align}

The entries for $\bmu_{\bs}$ and $\bSigma_{\bs}$ for the second part of the summand is given by the following.
\begin{enumerate}
    \item $\mathbb{E}[x_i^{a}x_j^{b}]=  \left\{\begin{array}{lr}
        \mathbb{E}_{\bx}[x_i^{a}x_j^{b}] & \text{for even a+b's} \\
        0 & \text{ for odd a+b's} 
        \end{array}\right\}, a,b : a+b =m' \leq m$
    \item $\cov[x^{a}_i x^{b}_j ,x^{c}_k x^{d}_l ]= \mathbb{E}_{\bx}[x_i^{a} x_j^{b} x_k^{c} x_l^{d} ]-\mathbb{E}[x_i^{a} x_j^{b}] \mathbb{E}[x_k^{c} x_l^{d} ]  a,b: a+b=m', c+d = m',m' \leq m$. 
    \end{enumerate}
    According to the Isserlis' theorem, only the even degree moments are non zero. Therefore we have the following cases.
\begin{align*}    
     \cov[x^{a}_i x^{b}_j ,x^{c}_k x^{d}_l]=  \left\{\begin{array}{lr}
        \mathbb{E}_{\bx}[x_i^{a} x_j^{b} x_k^{c} x_l^{d} ]-\mathbb{E}_{\bx}[x_i^{a} x_j^{b} ] \mathbb{E}_{\bx}[x_k^{c} x_l^{d}] & \text{  a+b and c+d are even} \\
        \mathbb{E}_{\bx}[x_i^{a} x_j^{b} x_k^{c} x_l^{d} ] & \text{a+b and c+d are odd} \\       
        0 & \text{a+b+c+d is odd} 
        \end{array}\right\}
\end{align*}

\subsection{$y\log(\log(1+ \exp(\bx^{T}\btheta)))$} 
%We know that $y\log(\log(1+ \exp(\bx^{T}\btheta))) \approx y (\bt(\bx).\boldeta(\btheta)= (y \bt(\bx))).\boldeta(\btheta)$.
The entries of $\bt(\bx)$ for $\log(\log(1+ \exp(\bx^{T}\btheta)))$ are the same as those are for $\log(1+ \exp(\bx^{T}\btheta))$. So let's compute the entries for $\bmu_{\bs}$ for $y\bt(\bx)$. 
\begin{align*}
\mathbb{E}_{\bx}[\bt(\bx)y]=\mathbb{E}_{\bx}[ x_i^{a}x_j^{b} \mathbb{E}_{y | \bx}[ y]]=\mathbb{E}_{\bx} [ x_i^{a}x_j^{b} \log(1+\exp(\bx^{T} \btheta))] \approx \mathbb{E}_{\bx}\Big[ x_i^{a} x_{j}^{b}  \Big(\log(2)+\frac{\bx^{T} \btheta}{2}+ \frac{(\bx^{T}\btheta)^{2}}{8} - \frac{(\bx^{T} \btheta)^{4}}{192}\Big) \Big].
\end{align*}
Here $\log(1+\exp(\bx^{T} \btheta))$ is approximated using the first four terms of the its Taylor expansion.
Each monomial in the expansion of $(\bx^{T} \btheta)^{p}, p \in \mathbb{N}^{\geq 0}$ has degree $p$ and we expand these monomials using the multinomial theorem. Once again after applying the Isserlis' theorem, we have the following cases. 

    \begin{align}    
    \label{eq:mean_poisson}
    \mathbb{E}[x_i^{a}x_j^{b} \log(1+ \exp(\bx^{T}\btheta))]=  \left\{\begin{array}{lr}
        \mathbb{E}_{\bx}[x_i^{a}x_j^{b} \Big( \log(2) + \frac{(\bx^{T} \btheta)^{2}}{8} - \frac{(\bx^{T} \btheta)^{4}}{192} \Big)] & \text{ for even a+b's} \\
        \mathbb{E}_{\bx}[x_i^{a}x_j^{b} \Big( \frac{\bx^{T}\btheta}{2} \Big)] & \text{ for odd a+b's} 
        \end{array}\right\}, a,b : a+b =m' \leq m.
    \end{align}
Similarly the terms of $\bSigma_{\bs}$ for all $a,b: a+b=m'\leq m, c+d=m', m' \leq m$  are: 
\begin{align*}    
    \cov[x_i^{a}x_j^{b} y,x_k^{c}x_l^{d} y]  &=     \mathbb{E}_{\bx}[x_i^{a}x_j^{b}  x_k^{c}x_l^{d} \mathbb{E}_{y|\bx}[y^{2}]] - \mathbb{E}_{\bx}[x_i^{a}x_j^{b} \mathbb{E}_{y|\bx}[y]] \mathbb{E}_{\bx}[x_k^{c}x_l^{d} \mathbb{E}_{y|\bx}[y]] \\ &= \mathbb{E}_{\bx}[x_i^{a}x_j^{b}x_k^{c}x_l^{d} \log^{2}(1+\exp(\bx^{T}\btheta  ))] - \mathbb{E}_{\bx}[x_i^{a}x_j^{b} \log(1+\exp(\bx^{T}\btheta))]  \mathbb{E}_{\bx}[x_k^{c}x_l^{d} \log(1+\exp(\bx^{T}\btheta))].
    \end{align*}
The second part of this subtraction can be evaluated using Equation~\ref{eq:mean_poisson}. Next, we  evaluate $\mathbb{E}_{\bx}[x_i^{a}x_j^{b}x_k^{c}x_l^{d} \log^{2}(1+\exp(\bx^{T} \btheta  ))]$. Using the Taylor series expansion, 
\begin{align*}    
\log^{2}(1+\exp(\bx^{T} \btheta  )) \approx \log^{2}(2) + (\bx^{T} \btheta) \log(2) + \frac{(\bx^{T} \btheta)^{2}(1+\log(2))}{4}+\frac{(\bx^{T}\btheta)^{3}}{8}.
\end{align*}

The surviving even degree moments that we evaluate are.
    \begin{align*}   
    \mathbb{E}[x_i^{a}x_j^{b}x_k^{c}x_l^{d} \log^{2}(1+ \exp(\bx^{T}\btheta))]=  \left\{\begin{array}{lr}
        \mathbb{E}_{\bx}\Big[x_i^{a}x_j^{b} x_k^{c}x_l^{d} \Big( \log^{2}(2) + \frac{(\bx^{T}\btheta)^{2}(1+\log(2))}{4} \Big)\Big] & \text{ for even a+b+c+d's} \\
        \mathbb{E}_{\bx}\Big[x_i^{a}x_j^{b} x_k^{c}x_l^{d} \Big( (\bx^{T}\btheta) \log(2) +\frac{(\bx^{T}\btheta)^{3}}{8} \Big)\Big] & \text{ for odd a+b+c+d's} 
        \end{array}\right\}.
        \end{align*}    

These expressions are further simplified using the multinomial theorem once again. 
\section{Sensitivity results }

\subsection{Individual sensitivities}

\begin{lemma} 
Consider two vectors $\bx, \bx' \in \mathbb{R}^{d}$ such that $||\bx||_2 \leq R$ and $||\bx'||_2 \leq R$. Then, an elementary analysis shows that
%global sensitivities for the linear and the quadratic terms for $\bt'$ are $2R$ and $\sqrt{2}R^{2}$ respectively.
\begin{equation*}
    \begin{aligned}
        || t_1(\bx)  -  t_1(\bx') ||_2 &\leq 2 R, \\
        || t_2(\bx)  -  t_2(\bx') ||_2 & \leq  \sqrt{2}R^{2}.
    \end{aligned}
\end{equation*}
\end{lemma}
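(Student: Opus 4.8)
The plan is to treat the two bounds separately; both follow from elementary inequalities, so the statement's reference to "an elementary analysis" is accurate and the proof is short.

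For the first bound, note that $t_1$ is just the identity map, so $|| t_1(\bx) - t_1(\bx') ||_2 = ||\bx - \bx'||_2$. The triangle inequality together with the hypotheses $||\bx||_2 \le R$ and $||\bx'||_2 \le R$ immediately gives $||\bx - \bx'||_2 \le ||\bx||_2 + ||\bx'||_2 \le 2R$.

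For the second bound, the key observation is that the $\sqrt{2}$ weights attached to the off-diagonal monomials in the definition \eqref{eq:t_12} of $t_2$ are chosen precisely so that $||t_2(\bx)||_2^2$ becomes a perfect square. Writing out the squared norm, $||t_2(\bx)||_2^2 = \sum_i x_i^4 + 2\sum_{i<j} x_i^2 x_j^2 = \big(\sum_i x_i^2\big)^2 = ||\bx||_2^4$, and likewise $\langle t_2(\bx), t_2(\bx') \rangle = \sum_i x_i^2 x_i'^2 + 2\sum_{i<j} x_i x_j x_i' x_j' = \big(\sum_i x_i x_i'\big)^2 = \langle \bx, \bx' \rangle^2$. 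Expanding the difference then yields
\[
|| t_2(\bx) - t_2(\bx') ||_2^2 = ||\bx||_2^4 + ||\bx'||_2^4 - 2\langle \bx, \bx' \rangle^2,
\]
which is exactly identity \eqref{eq:so}. Since $-2\langle \bx,\bx'\rangle^2 \le 0$ and $||\bx||_2^4, ||\bx'||_2^4 \le R^4$, we obtain $|| t_2(\bx) - t_2(\bx') ||_2^2 \le 2R^4$, and taking square roots gives the claimed bound $\sqrt{2}R^2$.

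There is essentially no obstacle here; the only care needed is the bookkeeping in the two perfect-square identities — matching the factor $2$ coming from the unordered off-diagonal pairs with the squared $\sqrt{2}$ weights — and observing that discarding the non-positive term $-2\langle\bx,\bx'\rangle^2$ is what yields a clean bound. It is worth remarking that this bound is attained only when $\bx \perp \bx'$, which is the same phenomenon exploited in Lemma~\ref{lem:t_12} to show that releasing the linear and quadratic terms jointly beats naively summing the two individual sensitivities.
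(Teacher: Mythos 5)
Your proof is correct and follows essentially the same route as the paper: the paper also reduces the $t_2$ bound to the identity $\norm{t_2(\bx)-t_2(\bx')}_2^2 = \norm{\bx}_2^4 + \norm{\bx'}_2^4 - 2\langle\bx,\bx'\rangle^2$ (obtained there by term-by-term expansion and rearrangement, which your perfect-square observation compresses) and then drops the non-positive term, while the $t_1$ bound is the same triangle-inequality argument. Only a minor nitpick: equality requires $\norm{\bx}_2=\norm{\bx'}_2=R$ in addition to $\bx\perp\bx'$.
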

When considering the Gaussian mechanism for releasing $t_1(\bx)$ or $t_2(\bx)$ such that element-wise Gaussian noises of variances $4R^2 \sigma^2$ and $2R^4 \sigma^2 $ are added to $t_1(\bx)$ and $t_2(\bx)$, respectively, their 
$(\varepsilon, \delta)$-analyses are equivalent to the analysis of Gaussian mechanism with sensitivity 1 and variance $\sigma^2$.
However, when releasing the linear and quadratic terms simultaneously, a better utility can be obtained.
To this end, we need the following relations.

\subsection{Second order terms}

In the case of $m=2$, we release the terms $(x_i^2, x_i x_j)$ with multipliers $(1, \sqrt{2})$. By rearranging, we have
\begin{equation*}
	\begin{aligned}
		\norm{t_2(x) - t_2(x')}_2^2 &= \sum\limits_{i=1}^d (x_i^2 - x_i'^2)^2 + 2 \sum\limits_{i>j} (x_i x_j -  x_i' x_j' )^2 \\
		&=  \sum\limits_{i} x_i^4 +  \sum\limits_{i} x_i'^4   - 2  \sum\limits_{i} x_i^2 x_i'^2 
		  + 2 \sum\limits_{i>j}  x_i^2 x_j^2  +   2 \sum\limits_{i>j}  x_i'^2 x_j'^2 - 4 \sum\limits_{i>j}  x_i x_j x_i' x_j' \\
		&=  \left( \sum\limits_{i} x_i^4 + 2 \sum\limits_{i>j}  x_i^2 x_j^2 \right) +  \left(  \sum\limits_{i} x_i'^4  +   2 \sum\limits_{i>j}  x_i'^2 x_j'^2 \right)
		 - 2  \sum\limits_{i} x_i^2 x_i'^2  - 4 \sum\limits_{i>j}  x_i x_j x_i' x_j'  \\
		&=  \left( \sum\limits_{i} x_i^4 + \sum\limits_{i \neq j}  x_i^2 x_j^2 \right) +  \left(  \sum\limits_{i} x_i'^4  +   \sum\limits_{i \neq j}  x_i'^2 x_j'^2 \right)
		 - 2  \sum\limits_{i} x_i^2 x_i'^2  - 2 \sum\limits_{i \neq j}  x_i x_j x_i' x_j'  \\
		&=  \norm{x}_2^4 +  \norm{x'}_2^4 - 2 \langle x, x' \rangle^2.
	\end{aligned}
\end{equation*}

\subsection{Third and fourth order terms}

We first illustrate the general case with the cases $m=3$ and $m=4$.
In the next section we describe the release mechanism and give its tight sensitivity for general $m$.

In the case of $m=3$, we release all the distinct terms of the form
$$
(x_i^3, x_i^2 x_j, x_i x_j x_k)
$$ 
with the corresponding multipliers $(1, \sqrt{3}, \sqrt{3})$. By rearranging, we have

\begin{equation*}
	\begin{aligned}
		\norm{t_3(x) - t_3(x')}_2^2 &= \sum\limits_{i=1}^d (x_i^3 - x_i'^3)^2 + 3 \sum\limits_{i \neq j} (x_i^2 x_j -  x_i'^2 x_j' )^2  + 
		3 \sum\limits_{i>j>k} (x_i x_j x_k -  x_i' x_j' x_k' )^2   \\
		&=  \sum\limits_{i} x_i^6 +  \sum\limits_{i} x_i'^6   - 2  \sum\limits_{i} x_i^3 x_i'^3
		+ 3 \sum\limits_{i \neq j}  x_i^4 x_j^2 + 3 \sum\limits_{i \neq j}  x_i'^4 x_j'^2 \\
		&	- 6 \sum\limits_{i \neq j}  x_i^2 x_j x_i'^2 x_j'   
			+ 3 \sum\limits_{i>j>k} x_i^2 x_j^2 x_k^2 + 3 \sum\limits_{i>j>k} x_i'^2 x_j'^2 x_k'^2 - 6  \sum\limits_{i>j>k} x_i x_j x_k x_i' x_j' x_k'  \\
		&= \left( \sum\limits_{i} x_i^6  + 3 \sum\limits_{i \neq j}  x_i^4 x_j^2 + 3 \sum\limits_{i>j>k} x_i^2 x_j^2 x_k^2  \right) + 
		    \left( \sum\limits_{i} x_i'^6  + 3 \sum\limits_{i \neq j}  x_i'^4 x_j'^2 + 3 \sum\limits_{i>j>k} x_i'^2 x_j'^2 x_k'^2  \right)   \\
		& - 2  \sum\limits_{i} x_i^3 x_i'^3 - 6 \sum\limits_{i \neq j}  x_i^2 x_j x_i'^2 x_j' - 6  \sum\limits_{i>j>k} x_i x_j x_k x_i' x_j' x_k' \\
		& = \norm{x}_2^6 +  \norm{x'}_2^6 - 2 \langle x, x' \rangle^3.
				%
		% 	 \\
		% &=  \left( \sum\limits_{i} x_i^4 + 2 \sum\limits_{i>j}  x_i^2 x_j^2 \right) +  \left(  \sum\limits_{i} x_i'^4  +   2 \sum\limits_{i>j}  x_i'^2 x_j'^2 \right)
		%  - 2  \sum\limits_{i} x_i^2 x_i'^2  - 4 \sum\limits_{i>j}  x_i x_j x_i' x_j'  \\
		% &=  \left( \sum\limits_{i} x_i^4 + \sum\limits_{i \neq j}  x_i^2 x_j^2 \right) +  \left(  \sum\limits_{i} x_i'^4  +   \sum\limits_{i \neq j}  x_i'^2 x_j'^2 \right)
		%  - 2  \sum\limits_{i} x_i^2 x_i'^2  - 2 \sum\limits_{i \neq j}  x_i x_j x_i' x_j'  \\
		% &=  \norm{x}_2^4 +  \norm{x'}_2^4 - 2 \langle x, x' \rangle^2.
	\end{aligned}
\end{equation*}

In the case of $m=4$, we release  all the distinct the terms of the forms
$$
(x_i^4,x_i^3 x_j, x_i^2 x_j^2, x_i^2 x_j x_k, x_i x_j x_k x_\ell)
$$
with the corresponding multipliers $(\sqrt{4},\sqrt{6},\sqrt{6},\sqrt{4})$. By rearranging, we have

\begin{equation*}
	\begin{aligned}
		\norm{t_4(x) - t_4(x')}_2^2 &= \sum\limits_{i=1}^d (x_i^4 - x_i'^4)^2 
		+ 4 \sum\limits_{i \neq j} (x_i^3 x_j -  x_i'^3 x_j' )^2
		+ 6 \sum\limits_{i > j} (x_i^2 x_j^2 -  x_i'^2 x_j'^2 )^2 \\ 
		& + 6 \sum\limits_{i \neq j, i \neq k, j \neq k} (x_i^2 x_j x_k -  x_i'^2 x_j' x_k' )^2  
		+ 4 \sum\limits_{i>j>k>\ell} (x_i x_j x_k x_\ell -  x_i' x_j' x_k' x_\ell' )^2  \\
		 &=  \bigg( \sum\limits_{i} x_i^8 +  \sum\limits_{i} x_i'^8   - 2  \sum\limits_{i} x_i^4 x_i'^4 \bigg)
 		+  \bigg( 4 \sum\limits_{i \neq j}  x_i^6 x_j^2 + 4 \sum\limits_{i \neq j}  x_i'^6 x_j'^2 \\
		& - 8 \sum\limits_{i \neq j} x_i^3 x_j x_i'^3 x_j' \bigg)
		+  \bigg( 6 \sum\limits_{i > j} x_i^4 x_j^4 + 6 \sum\limits_{i > j} x_i'^4 x_j'^4 - 12 \sum\limits_{i > j} x_i^2 x_j^2   x_i'^2 x_j'^2 \bigg) \\
		&  +  \bigg( 6 \sum\limits_{i \neq j, i \neq k, j \neq k} x_i^4 x_j^2 x_k^2 
		 + 6 \sum\limits_{i \neq j, i \neq k, j \neq k} x_i'^4 x_j'^2 x_k'^2 
		 - 12 \sum\limits_{i \neq j, i \neq k, j \neq k} x_i^2 x_j x_k  x_i'^2 x_j' x_k' \bigg) \\
	&	 +  \bigg( 4 \sum\limits_{i>j>k>\ell} x_i^2 x_j^2 x_k^2 x_\ell^2 
		 + 4 \sum\limits_{i>j>k>\ell} x_i'^2 x_j'^2 x_k'^2 x_\ell'^2
		 - 8  \sum\limits_{i>j>k>\ell} x_i x_j x_k x_\ell  x_i' x_j' x_k' x_\ell' \bigg) \\ 
		 &=  \bigg( \sum\limits_{i} x_i^8 + 4 \sum\limits_{i \neq j}  x_i^6 x_j^2  + 6 \sum\limits_{i > j} x_i^4 x_j^4 
		 + 4 \sum\limits_{i>j>k>\ell} x_i^2 x_j^2 x_k^2 x_\ell^2  \bigg) + \\
		 & + \bigg( \sum\limits_{i} x_i'^8 + 4 \sum\limits_{i \neq j}  x_i'^6 x_j'^2  + 6 \sum\limits_{i > j} x_i'^4 x_j'^4 
		 + 4 \sum\limits_{i>j>k>\ell} x_i'^2 x_j'^2 x_k'^2 x_\ell'^2   \bigg) \\
		 & - 2 \bigg(  \sum\limits_{i} x_i^4 x_i'^4 + 4 \sum\limits_{i \neq j} x_i^3 x_j x_i'^3 x_j'
		 +6 \sum\limits_{i \neq j, i \neq k, j \neq k} x_i^2 x_j x_k  x_i'^2 x_j' x_k' \\
 		 & + 6 \sum\limits_{i > j} x_i^2 x_j^2   x_i'^2 x_j'^2
		 +4 \sum\limits_{i>j>k>\ell} x_i x_j x_k x_\ell  x_i' x_j' x_k' x_\ell'		\bigg) \\
		 &=  \norm{x}_2^8 +  \norm{x'}_2^8 - 2 \langle x, x' \rangle^4.
	\end{aligned}
\end{equation*}

\subsection{General case}

For a general $m$, if we release each distinct $m$th order termof the form
$$
x_{i_1}^{k_1} \cdots x_{i_{m'}}^{k_{m'}},
$$
where $\sum_i k_i = m$, $1 \leq m' \leq m$, multiplied with the multinomial factor $\sqrt{\binom{m}{k_1,\ldots,k_{m'}}}$,
then the function $t_m(x)$ has the sensitivity
$$
\norm{t_m(x) - t_m(x')}_2^2 =  \norm{x}_2^{2m} +  \norm{x'}_2^{2m} - 2 \langle x, x' \rangle^m.
$$
This is shown similarly as above for $t_2(x), t_3(x)$ and $t_4(x)$. Namely, we have
\begin{equation*}
	\begin{aligned}
		\norm{t_m(x) - t_m(x')}_2^2 &=    \sum\limits_{k_1 + \ldots k_{m'} = m, \, 1 \leq m' \leq m} 
		\, \, \sum\limits_{i_1, \ldots i_{m'}}  \binom{m}{k_1,\ldots,k_{m'}}  ( x_{i_1}^{k_1} \cdots  x_{i_{m'}}^{k_{m'}}
		 - x_{i_1}'^{k_1} \cdots  x_{i_{m'}}'^{k_{m'}} )^2 \\
		 &=    \bigg( \sum\limits_{k_1 + \ldots k_{m'} = m, \, 1 \leq m' \leq m} 
		 	\, \,	  \sum\limits_{i_1, \ldots i_{m'}} 	\binom{m}{k_1,\ldots,k_{m'}} x_{i_1}^{2 k_1} \cdots  x_{i_{m'}}^{2 k_{m'}} \bigg) \\
		& + \bigg( \sum\limits_{k_1 + \ldots k_{m'} = m, \, 1 \leq m' \leq m} 
		 	\, \,	 \sum\limits_{i_1, \ldots i_{m'}} 	\binom{m}{k_1,\ldots,k_{m'}} x_{i_1}'^{2 k_1} \cdots  x_{i_{m'}}'^{2 k_{m'}}	\bigg)	  \\
		& - 2 \cdot \bigg( \sum\limits_{k_1 + \ldots k_{m'} = m, \, 1 \leq m' \leq m} 
		 	\, \,		 \sum\limits_{i_1, \ldots i_{m'}} \binom{m}{k_1,\ldots,k_{m'}} x_{i_1}^{k_1} x_{i_1}'^{k_1} \cdots 
				  x_{i_{m'}}^{k_{m'}}  x_{i_{m'}}'^{k_{m'}}  \bigg) \\	
		 &=  \norm{x}_2^{2m} +  \norm{x'}_2^{2m} - 2 \langle x, x' \rangle^m,
	\end{aligned}
\end{equation*}
where 
$
\sum\nolimits_{k_1 + \ldots k_{m'} = m, \, 1 \leq m' \leq m} 
$
denotes a sum over all \emph{combinations} of positive integers $(k_1,\ldots,k_{m'})$ such that $k_1 + \ldots k_{m'} = m$ and
$$
\sum\limits_{i_1, \ldots i_{m'}} x_{i_1}^{k_1} \cdots x_{i_{m'}}^{k_{m'}}
$$
denotes a sum over all different monomials with $(k_1,\ldots,k_{m'})$ as exponents.

\subsection{Sensitivity for the Poisson regression}

We next prove the sensitivity result for the Poisson regression given in the main text.

\begin{lemma}
\label{lemma:Poisson_reg_sensitivity}
Let $\sigma_1,\sigma_2,\sigma_3,\sigma_4>0$.
Suppose $||\bx||_2 \leq R_x$ and $y \leq R_y$.
Consider the mechanism 
$$
\mathcal{M}(\bx) = \begin{bsmallmatrix} t_1(\bx) \\ t_2(\bx) \\ y t_1(\bx) \\ y t_2(\bx)\end{bsmallmatrix} + \mathcal{N}\left(0, \begin{bsmallmatrix} \sigma_1^2 I_d & 0 & 0 & 0 \\ 0 & \sigma_2^2 I_{d_2} & 0 & 0 \\
0 & 0 & \sigma_3^2 I_{d_1} & 0 \\
0 & 0 & 0 & \sigma_4^2 I_{d_2} \end{bsmallmatrix} \right).
$$
Then, the tight $(\epsilon,\delta)$-DP for $\mathcal{M}$ 
is obtained by considering a Gaussian mechanism with noise variance $\sigma_1^2$
and sensitivity
$$
\Delta = \frac{ 2 (c_2 + c_4) R_x^2 + c_3 + 1 }{ \sqrt{2(c_2+c_4)} },
$$
where $c_2 = \tfrac{\sigma_1^2}{\sigma_2^2}$,
$c_3 = \tfrac{\sigma_1^2}{\sigma_3^2} R_y^2$ and
$c_4 = \tfrac{\sigma_1^2}{\sigma_4^2} R_y^2$.
% \frac{\sigma_2^2}{2 \sigma_1^2} 
% + 2 R_x^2 + 2 \frac{\sigma_1^2}{ \sigma_2^2} R_x^4   
% + \frac{\sigma_1^2}{ \sigma_3^2} R_y^2}
% $$
\begin{proof}
\begin{equation*}
     \begin{aligned}
     \mathcal{M}(\bx) & \sim \begin{bmatrix} t_1(\bx) \\ t_2(\bx) \\ y t_1(\bx) \\ y t_2(\bx)\end{bmatrix} + \mathcal{N}\left(0, \begin{bmatrix} \sigma_1^2 I_d & 0 & 0 & 0 \\ 0 & \sigma_2^2 I_{d_2} & 0 & 0 \\
0 & 0 & \sigma_3^2 I_{d_1} & 0 \\
0 & 0 & 0 & \sigma_4^2 I_{d_2} \end{bmatrix} \right) \\
   &  \sim \begin{bmatrix} I_d & 0 & 0 & 0\\
  0 &  \frac{\sigma_2}{\sigma_1} I_{d_2} & 0 & 0 \\
  0 & 0 & \frac{\sigma_3}{\sigma_1} I_d & 0 \\
   0 & 0 & 0 & \frac{\sigma_4}{\sigma_1} I_{d_2} \\
  \end{bmatrix} \left( \begin{bmatrix} t_1(\bx) \\ \frac{\sigma_1}{\sigma_2} t_2(\bx) \\ \frac{\sigma_1}{\sigma_3} y t_1(\bx) \\ \frac{\sigma_1}{\sigma_4} y t_2(\bx)\end{bmatrix} +  \mathcal{N}(0, \sigma_1^2 I ) \right).
%   + %  \mathcal{N}\left(0, \begin{bmatrix} \sigma_1^2 I_d & 0 \\ 0 & \sigma_1^2 I_{d_2} \end{bmatrix} \right) \\
%      &  \sim \begin{bmatrix} I_d & 0 \\
%  0 &  \frac{\sigma_2}{\sigma_1} I_{d_2} 
%  \end{bmatrix} \left( \begin{bmatrix} t_1(\bx) \\ \frac{\sigma_1}{\sigma_2} t_2(\bx) \end{bmatrix} +   \mathcal{N}(0, \sigma_1^2 )  \right) \\
     \end{aligned}
 \end{equation*}
Removing the constant scaling, we see that it is equivalent to consider the mechanism
\begin{equation*}
     \mathcal{M}(\bx) = \begin{bmatrix} t_1(\bx) \\ \frac{\sigma_2}{\sigma_1} t_2(\bx) \\ \frac{\sigma_1}{\sigma_3} y t_1(\bx) \\ \frac{\sigma_1}{\sigma_4} y t_2(\bx)\end{bmatrix} +  \mathcal{N}(0, \sigma_1^2 I ).
 \end{equation*}
Let $\bx,\bx' \in \mathbb{R}^d$ such that $||\bx||_2 \leq R_x$ and $||\bx'||_2 \leq R_x$.
Define
$$
F(\bx) =  \begin{bmatrix} t_1(\bx) \\ \frac{\sigma_1}{\sigma_2} t_2(\bx) \\ \frac{\sigma_1}{\sigma_3} y t_1(\bx) \\ \frac{\sigma_1}{\sigma_4} y t_2(\bx)\end{bmatrix}.
$$
Since
\begin{equation*} 
 \begin{aligned}
     || t_1(\bx) - t_1(\bx') ||^{2}_2 &= ||\bx||^2 + ||\bx'||^2 - 2 \langle \bx, \bx' \rangle \leq 2 R_x^2 - 2 \langle \bx, \bx' \rangle, \\
  || t_2(\bx) - t_2(\bx') ||^{2}_2 &= ||\bx||^4 + ||\bx'||^4 - 2 \langle \bx, \bx' \rangle^2  \leq 2R_x^4 - 2 \langle \bx, \bx' \rangle^2,
 \end{aligned}
 \end{equation*} 
 we have that 
\begin{equation*} 
 \begin{aligned}
\norm{F(\bx) - F(\bx')}^2 & = \norm{ t_1(\bx) - t_1(\bx') }^2 
+ \frac{\sigma_1^2}{\sigma_2^2} \norm{ t_2(\bx) - t_2(\bx') }^2 
+ \frac{\sigma_1^2}{\sigma_3^2} \norm{ y t_1(\bx) - y t_1(\bx') }^2 
+ \frac{\sigma_1^2}{\sigma_4^2} \norm{ y t_2(\bx) - y t_2(\bx') }^2 \\
& \leq - 2t + 2R_x^2 - 2 c_2 t^2 + 2c_2 R_x^4 -2 c_3 t + 2 c_3 R_x^2  - 2 c_4 t^2 + 2c_4 R_x^4,
 \end{aligned}
 \end{equation*} 
 where $c_2 = \frac{\sigma_1^2}{\sigma_2^2}$, $c_3 = \frac{\sigma_1^2}{\sigma_3^2} R_y^2 $, $c_4 = \frac{\sigma_1^2}{\sigma_4^2} R_y^2$ and $t = \langle \bx, \bx' \big\rangle$. The right hand side of this inequality has its maximum at
 $$
 t = - \frac{1+c_3}{2(c_2 + c_4)}
 $$
 which shows that
 $$
 \norm{F(\bx) - F(\bx')}^2 \leq \frac{ (2 (c_2+c_4) R_x^2 + c_3 + 1)^2  }{2 (c_2+c_4) }.
 $$
 This gives sensitivity bound of the claim.
\end{proof}
\end{lemma}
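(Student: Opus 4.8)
The plan is to collapse the block-diagonal Gaussian mechanism to a scalar Gaussian mechanism and then compute a single worst-case $L_2$ distance. First I would left-multiply the noisy output by the invertible, data-independent block-diagonal matrix $\mathrm{diag}\!\left(I_d,\ \tfrac{\sigma_2}{\sigma_1}I_{d_2},\ \tfrac{\sigma_3}{\sigma_1}I_d,\ \tfrac{\sigma_4}{\sigma_1}I_{d_2}\right)$. Since this map and its inverse are both post-processing, $\mathcal{M}$ has exactly the same $(\epsilon,\delta)$ profile as $F(\bx)+\mathcal{N}(0,\sigma_1^2 I)$ with
$$
F(\bx)=\begin{bmatrix} t_1(\bx) \\ \tfrac{\sigma_1}{\sigma_2}t_2(\bx) \\ \tfrac{\sigma_1}{\sigma_3}\,y\,t_1(\bx) \\ \tfrac{\sigma_1}{\sigma_4}\,y\,t_2(\bx)\end{bmatrix}.
$$
Because the noise $\mathcal{N}(0,\sigma_1^2 I)$ is spherically symmetric, the privacy loss between two neighbouring records depends on $F$ only through $\norm{F(\bx)-F(\bx')}_2$ (rotate the difference onto one coordinate), so the tight analysis of $\mathcal{M}$ coincides with that of a one-dimensional Gaussian mechanism of variance $\sigma_1^2$ and sensitivity $\Delta=\sup\norm{F(\bx)-F(\bx')}_2$, whose exact $(\epsilon,\delta)$ is then read off from the analytic Gaussian mechanism. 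It remains to evaluate $\Delta$.

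For that I would expand $\norm{F(\bx)-F(\bx')}_2^2$ one block at a time, using identities already established in the excerpt: $\norm{t_1(\bx)-t_1(\bx')}^2=\norm{\bx}^2+\norm{\bx'}^2-2\langle\bx,\bx'\rangle$ and $\norm{t_2(\bx)-t_2(\bx')}^2=\norm{\bx}^4+\norm{\bx'}^4-2\langle\bx,\bx'\rangle^2$. Writing $t=\langle\bx,\bx'\rangle$ and invoking $\norm{\bx}_2,\norm{\bx'}_2\le R_x$ bounds the first two blocks by $2R_x^2-2t$ and $\tfrac{\sigma_1^2}{\sigma_2^2}(2R_x^4-2t^2)$. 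For the $y$-weighted blocks I would argue that the worst case (at least over the range of $t$ relevant to the eventual maximiser) is $y=y'=R_y$ together with $\norm{\bx}=\norm{\bx'}=R_x$, which gives the bounds $\tfrac{\sigma_1^2}{\sigma_3^2}R_y^2(2R_x^2-2t)$ and $\tfrac{\sigma_1^2}{\sigma_4^2}R_y^2(2R_x^4-2t^2)$. Summing and writing $c_2=\sigma_1^2/\sigma_2^2$, $c_3=(\sigma_1^2/\sigma_3^2)R_y^2$, $c_4=(\sigma_1^2/\sigma_4^2)R_y^2$ yields the concave quadratic in $t$
$$
\norm{F(\bx)-F(\bx')}_2^2\le -2(c_2+c_4)\,t^2-2(1+c_3)\,t+2(1+c_3)R_x^2+2(c_2+c_4)R_x^4 .
$$

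Finally I would maximise this parabola: its vertex sits at $t=-\tfrac{1+c_3}{2(c_2+c_4)}$, and substituting back gives $\sup\norm{F(\bx)-F(\bx')}_2^2=\tfrac{(2(c_2+c_4)R_x^2+c_3+1)^2}{2(c_2+c_4)}$, so $\Delta$ is its square root, which is exactly the stated formula; specialising to $R_x=1$ and $\sigma_1=\dots=\sigma_4=\sigma$ (hence $c_2=1$, $c_3=c_4=R_y^2$) recovers $\Delta=\sqrt{4\tfrac12(1+R_y^2)}$. The main obstacle I anticipate is precisely the $y$-weighted blocks: unlike the pure-$\bx$ blocks, the neighbouring relation there changes both $\bx\mapsto\bx'$ and $y\mapsto y'$, so one must check that jointly maximising over $y,y'\in[0,R_y]$ does not beat the $y=y'=R_y$ choice, and---for the word ``tight''---that the vertex $t=-\tfrac{1+c_3}{2(c_2+c_4)}$ actually lies in the feasible interval $[-R_x^2,R_x^2]$ so that a genuine pair of neighbours attains the bound (otherwise the true maximum would sit at the endpoint $t=-R_x^2$). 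Verifying these two points, rather than the routine algebra, is where the real care lies.
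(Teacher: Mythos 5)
Your proposal follows essentially the same route as the paper's proof: rescale the block-diagonal mechanism into a single Gaussian mechanism with variance $\sigma_1^2$, bound $\norm{F(\bx)-F(\bx')}_2^2$ by a concave quadratic in $t=\langle \bx,\bx'\rangle$ with coefficients $c_2,c_3,c_4$, and maximise at the vertex $t=-\tfrac{1+c_3}{2(c_2+c_4)}$, which yields exactly the stated $\Delta$. The two caveats you flag---checking that $y=y'=R_y$ is really the worst case for the $y$-weighted blocks, and that the vertex lies in $[-R_x^2,R_x^2]$ so the bound is attained (as needed for ``tight'')---are genuine subtleties that the paper's own proof silently passes over (it writes the same $y$ in both arguments and never verifies feasibility of the vertex), so your extra care is warranted rather than a deviation.
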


\section{Preliminary experiments on Poisson regression}
\para{Implementation.} We used Metropolis-Hastings algorithm to infer the model parameter posteriors for Poisson regression. We gave the regression coefficients $\btheta_{\bs}$ a standard normal prior and the data covariance $\bSigma_{\bs}$ an Inverse Wishart prior. As mentioned in the main draft, to the best of our knowledge, this is the first work that analyzes Poisson regression under DP constraints. Instead of employing the Isserlis theorem, we approximated the normal approximation parameters using MC integration. Specifying a more accurate, efficient, and scalable model for Poisson regression in sophisticated probabilistic programmings frameworks such as Stan is marked as a future exercise. We use synthetic data of $500$ samples generated with $\btheta=[0.3,-0.6,0.8]$ and a valid non-identity co-variance matrix. We filter out $||\bx||_2 > R_x=1$. The proposal standard deviation for the MH sampler was set to $0.01$. Our sampler runs for 50,000 iterations, out of which we discard the first 25,000 burn-in samples. We repeat each inference for $5$ times. 

\para{Results.} Figure~\ref{fig:cdfepsilon} compares private and non-private empirical CDFs for $\btheta'$s for various $\epsilon$ values within range $[0.1,1.1]$. The last plot shows the Kolmogorov-Smirnov scores between these CDFs for a few $\epsilon$ values in the same range. We note that the private CDFs tend to (partially) overlap on their non-private variants as $\epsilon$ increases. We suspect that the overlap is not as strong as it is in logistic regression due to a) more noise, which is a consequence of significantly more number of approximate sufficient statistics and larger range of $y$ (already explained by Lemma~\ref{lemma:Poisson_reg_sensitivity}), b) smaller sample size of $500$ and smaller number of inference repeats causing more uncertainty.  

\par We believe that it may be possible to improve these results with faster converging sampling algorithms and by designing better prior distributions. However, these preliminary explorations do demonstrate the merit of our model and tight sensitivity results.

\begin{center}
\begin{figure*}[h]
\includegraphics[scale=0.6]{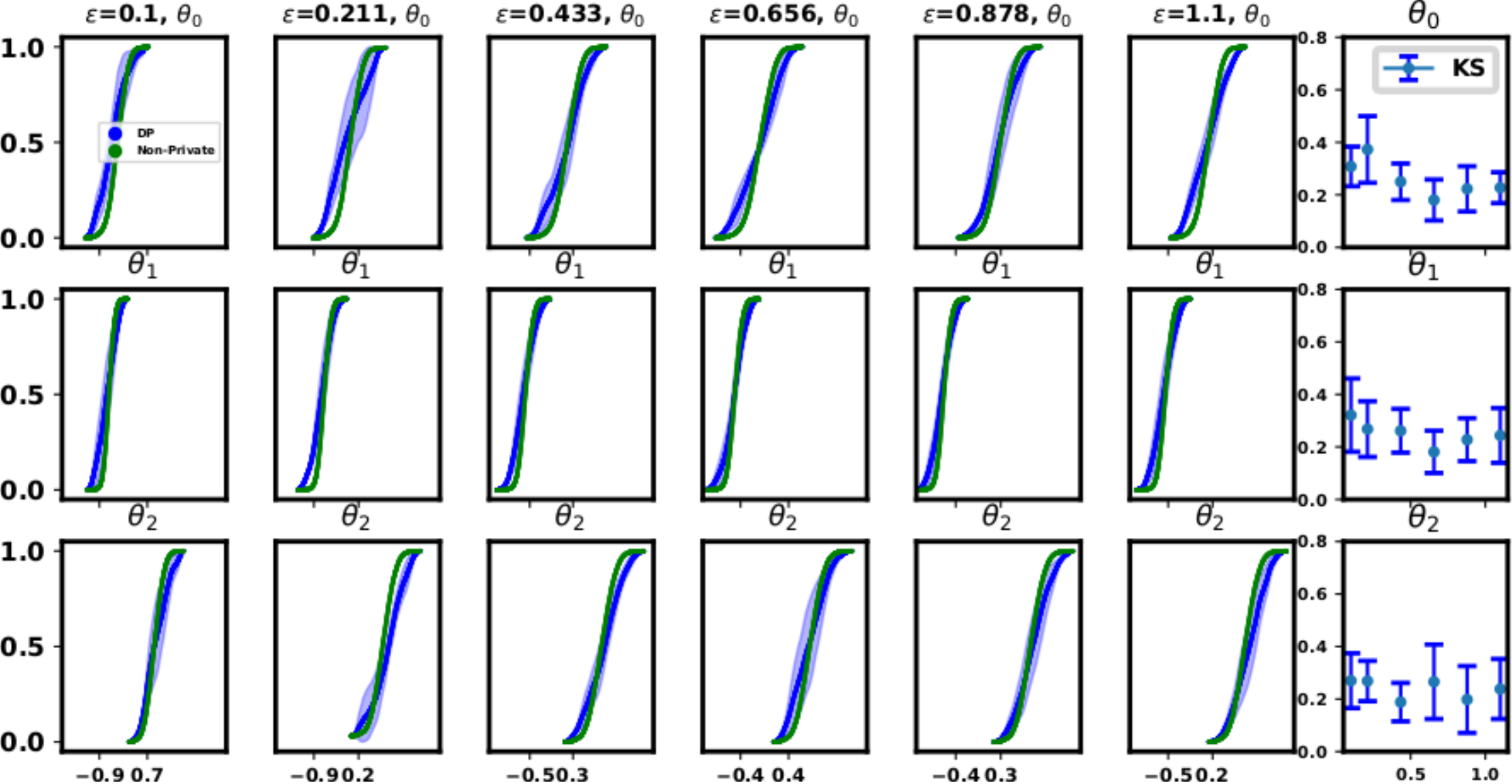}
\caption{Comparison of differentially private and non-private empirical CDFs for $\btheta$'s posteriors for Poisson regression for various $\epsilon$ values. We use a synthetic dataset of $N=500$ samples and $R_x=1,R_y=5,\delta=10^{-5}$. The right-most column shows the Kolmogorov-Smirnov scores between non-private and private empirical CDFs for the same set of $\epsilon$ values.}
    \label{fig:cdfepsilon}
\end{figure*}
\end{center}

\end{document}